\documentclass[11pt]{amsart}

\usepackage{eulervm}
\usepackage{eufrak}

\usepackage[usenames,dvipsnames,svgnames,table]{xcolor}
\usepackage{amssymb,amsfonts}
\usepackage{amsmath,amsthm}
\usepackage{enumitem}
\usepackage{graphicx}
\usepackage{booktabs}
\usepackage{dsfont}
\usepackage[pdfdisplaydoctitle,colorlinks,urlcolor=blue,linkcolor=blue,citecolor=blue]{hyperref}
\usepackage{algorithm}
\usepackage{algorithmic}
\usepackage{multirow}
\usepackage{caption}
\usepackage{subcaption}

\newcommand{\argmax}{\,{\rm argmax}}
\newcommand{\argmin}{\,{\rm argmin}}

\newcommand\R{{\mathbb{R}}}

\newcommand{\DeltaM}{\Delta_M}           

\newtheorem{theorem}{Theorem}[section]
\newtheorem{proposition}[theorem]{Proposition}
\newtheorem{lemma}[theorem]{Lemma}

\theoremstyle{definition}
\newtheorem{definition}[theorem]{Definition}

\theoremstyle{remark}
\newtheorem{remark}[theorem]{Remark}
\numberwithin{equation}{section}

\begin{document}

\title[Three Phases of Expert Routing]
{Three Phases of Expert Routing:\\
How Load Balance Evolves During Mixture-of-Experts Training}

\author{Charafeddine Mouzouni}

\dedicatory{\small\textit{OPIT -- Open Institute of Technology, and Cohorte AI, Paris, France.}\\[2pt]\texttt{charafeddine@cohorte.co}}

\date{April 2026}

\begin{abstract}
We model Mixture-of-Experts (MoE) token routing as a congestion game with a single effective parameter---the congestion coefficient $\gamma_{\mathrm{eff}}$---that quantifies the balance-quality tradeoff. Tracking $\gamma_{\mathrm{eff}}$ across training checkpoints of two open-source MoE models---OLMoE-1B-7B (20 checkpoints, with dense sampling in the surge region) and OpenMoE-8B (6 checkpoints)---reveals a three-phase trajectory: a \emph{surge} phase where the router learns to balance load ($\gamma_{\mathrm{eff}}$: $14 \to 36$--$39$, peaking in the step 30K--40K region), a \emph{stabilization} phase where experts specialize under steady balance ($B_0$: $2.4 \to 2.3$, steps 100K--400K), and a \emph{relaxation} phase where the router trades balance for quality as experts differentiate ($\gamma_{\mathrm{eff}}$: $27 \to 9$, steps 400K--1.2M). This non-monotone trajectory---invisible to post-hoc analysis of converged models---reveals that early MoE training prioritizes balance while late training prioritizes quality.

The theoretical framework is honest about its limits: the single-type equilibrium \emph{reduces to temperature-scaled softmax} (held-out $L^1$: MFG $= 0.199$ vs.\ softmax $= 0.200$). The game is not a better predictor; it reveals \emph{what the temperature means} and, critically, how that temperature evolves. Annealing checkpoints confirm that the three phases are pretraining-specific: $\gamma_{\mathrm{eff}}$ is stable during fine-tuning. We complement the dynamics with an effective congestion decomposition ($\gamma_{\mathrm{eff}} = \gamma_{\mathrm{explicit}} + \gamma_{\mathrm{implicit}}$), a multi-type extension that improves load prediction via token clustering on all 16 layers (mean: $30\%$; robust to cluster count $K = 2, 4, 8$), and scope diagnostics ($K/M$, $\varepsilon_l$) that characterize where the per-layer model applies. All confidence intervals are from bootstrap resampling over 50 independent text batches. Code and data: \url{https://github.com/Cmouzouni/three-phases-moe}.
\end{abstract}

\keywords{Mixture-of-Experts, mean-field games, congestion games, load balancing, training dynamics, token routing.}

\maketitle

\section{Introduction}
\label{sec:intro}

Mixture-of-Experts (MoE) architectures scale model capacity by routing each token to a subset of specialized expert networks~\cite{shazeer2017,fedus2022switch,lepikhin2021gshard}. The central engineering challenge is \emph{load balancing}: without intervention, tokens concentrate on a few high-quality experts, leaving the rest idle. The standard remedy is the auxiliary balance loss~\cite{fedus2022switch}, which penalizes load concentration through a tunable coefficient~$\alpha$. Variants include bias-based balancing~\cite{wang2024auxiliary}, capacity factors~\cite{lepikhin2021gshard}, and expert-choice routing~\cite{zhou2022mixture}. Each is effective in practice. None explains \emph{how} the balance-quality tradeoff evolves during training.

We observe that MoE routing is structurally a \emph{congestion game}~\cite{rosenthal1973class}. Tokens are players, experts are resources, expert quality determines individual payoffs, and load imbalance imposes congestion costs. When the token count is large ($N = 2048$--$32768$ in practice), the game admits a mean-field limit with a single effective parameter: the congestion coefficient $\gamma$, which quantifies the strength of the quality-balance tradeoff.

\paragraph{What the theory reveals---and what it does not.}
We prove that the single-type mean-field game (MFG) equilibrium reduces to temperature-scaled softmax for well-balanced models (Theorem~\ref{thm:softmax-equiv}). Empirically, the two are indistinguishable: on OLMoE-1B-7B~\cite{muennighoff2024olmoe}, the MFG achieves held-out $L^1 = 0.199$ versus softmax $L^1 = 0.200$. The game does not outperform softmax as a load predictor. It tells us \emph{why} softmax arises (unique equilibrium of a potential game) and \emph{what the temperature means} (the congestion coefficient).

The value of the game-theoretic lens is not in static prediction. It is in dynamics.

\paragraph{The three-phase trajectory.}
By fitting $\gamma_{\mathrm{eff}}$ at each of 20 training checkpoints of OLMoE-1B-7B (50 texts per checkpoint, bootstrap confidence intervals), we discover that $\gamma_{\mathrm{eff}}$ follows a characteristic non-monotone trajectory:

\begin{enumerate}[label=\textbf{Phase \arabic*.},leftmargin=4.5em]
  \item \textbf{Surge} (steps 5K--50K). $\gamma_{\mathrm{eff}}$ rises from $13.7$ to a peak of $36$--$39$ at steps 30K--40K. Routing entropy climbs from $0.923$ to $0.974$.

  \item \textbf{Stabilization} (steps 100K--400K). The effective congestion plateaus at $\gamma_{\mathrm{eff}} \approx 24$--$28$ while experts specialize underneath: the quality spread $B_0$ drops from $2.41$ to $2.25$. The router has found its operating point for balance; expert learning proceeds within this constraint.

  \item \textbf{Relaxation} (steps 400K--1.2M). As expert roles solidify, the router loosens its balance enforcement. $\gamma_{\mathrm{eff}}$ declines from $26.6\,[25.0, 28.4]$ to $8.5\,[6.7, 11.4]$. The model trades balance for quality: experts have differentiated enough that the router can afford selectivity.
\end{enumerate}

This inverted-U trajectory is the paper's central finding. It is invisible to any analysis of a converged model and reveals a fundamental tension: the early optimizer prioritizes balance, the late optimizer prioritizes quality. The transition between these regimes is governed by the anti-concentration threshold $\gamma_c = MB_0/(M-1)$.

The pattern is not an artifact of layer-averaging: per-layer analysis shows 12 of 16 layers individually exhibit the surge pattern (early peak $> 1.5\times$ start) and 10 of 16 show relaxation (final $< 0.6\times$ mid-peak). Layers 12--15 never develop congestion structure ($\hat\gamma \to 0$ throughout training), consistent with the mean-field assumption breaking down for late layers where token representations are most differentiated.

\paragraph{Supporting contributions.}
Beyond the dynamics, three results complement the main finding:

\begin{enumerate}[label=\textbf{C\arabic*.},leftmargin=2em]
  \item \textbf{Effective congestion decomposition} (Section~\ref{sec:effective}). The fitted $\gamma_{\mathrm{eff}}$ decomposes as $\gamma_{\mathrm{explicit}} + \gamma_{\mathrm{implicit}}$, where $\gamma_{\mathrm{explicit}} = \alpha M$ comes from the auxiliary loss and $\gamma_{\mathrm{implicit}}$ captures balance internalized by training. At convergence, $\gamma_{\mathrm{eff}} = 8.5$ on average while $\gamma_{\mathrm{explicit}} = 0.64$: the optimizer internalizes $13\times$ more effective congestion than the explicit loss provides.

  \item \textbf{Multi-type MFG} (Section~\ref{sec:multitype}). A $K$-type extension models token heterogeneity: each type has its own quality vector while all types share the congestion signal. This goes beyond softmax-with-temperature by introducing population structure. The multi-type equilibrium improves load prediction on all 16 layers (mean improvement: 30\%, early layers 36\%, late layers 26\%). The result is robust to cluster count: $K = 2$ wins on 14/16 layers, $K = 4$ on 15/16, $K = 8$ on 14/16.

  \item \textbf{Scope diagnostics} (Section~\ref{sec:scope}). The top-$K$ approximation bound shows the MFG error scales with $1 - K/M$. The continuation spread $\varepsilon_l$ predicts per-layer fit quality ($r = 0.63$, $p = 0.012$). Together, these characterize where the per-layer model applies and where it breaks down.
\end{enumerate}

\paragraph{Outline.} Section~\ref{sec:model} develops the congestion game model. Section~\ref{sec:dynamics} defines effective congestion and presents the three-phase dynamics. Section~\ref{sec:multitype} develops the multi-type extension. Section~\ref{sec:scope} collects the scope characterization theory. Section~\ref{sec:experiments} presents the full empirical analysis. Section~\ref{sec:related} discusses related work. Section~\ref{sec:discussion} discusses implications and limitations.

\section{The Congestion Game Model}
\label{sec:model}

\subsection{Mixture-of-Experts routing}

A Mixture-of-Experts layer consists of $M$ expert networks $\{E_1, \ldots, E_M\}$ and a gating (router) network. Given an input token $x \in \R^d$, the router computes scores $s_i(x) = w_i^\top x + b_i$ for each expert $i$ and selects the top-$K$ experts by score. The output is
\begin{equation}\label{eq:moe-output}
  y = \sum_{i \in \text{Top-}K} g_i(x) \cdot E_i(x),
\end{equation}
where $g_i(x) = \mathrm{softmax}(s(x))_i$ restricted to the selected experts.

The dominant load-balancing mechanism is the auxiliary balance loss~\cite{fedus2022switch}: $L_{\mathrm{aux}} = \alpha M \sum_{i=1}^M f_i P_i$, where $f_i$ is the fraction of tokens dispatched to expert $i$, $P_i$ is the average router probability, $M$ is the number of experts, and $\alpha$ is a tunable coefficient. All balancing mechanisms share a common structure: they penalize load imbalance, trading expert quality for utilization.

\subsection{The mean-field game formulation}

We map MoE routing to a mean-field game on the finite state space $\{1, \ldots, M\}$. Tokens are agents, experts are states. The population distribution is $\mu \in \DeltaM$. Each agent's cost at state $i$ given population $\mu$ is
\begin{equation}\label{eq:mfg-cost}
  \ell(i, \mu) = -q_i + \gamma \mu_i,
\end{equation}
where $q_i$ is the quality of expert $i$ and $\gamma \geq 0$ is the congestion coefficient. An agent choosing a mixed strategy $\pi \in \DeltaM$ with entropy regularization incurs total cost
\begin{equation}\label{eq:total-cost}
  J(\pi, \mu) = \sum_{i=1}^M \pi_i \ell(i, \mu) + \lambda \sum_{i=1}^M \pi_i \log(M \pi_i),
\end{equation}
where $\lambda > 0$ is the entropy regularization strength. Throughout this paper, $\lambda = 1.0$, corresponding to the standard softmax temperature used in MoE routers.

\begin{definition}[MFG equilibrium]\label{def:mfg-eq}
  A distribution $\mu^* \in \DeltaM$ is an \emph{MFG equilibrium} if $\mu^* = \argmin_\pi J(\pi, \mu^*)$.
\end{definition}

\subsection{Potential structure and uniqueness}

The equilibrium satisfies the implicit system
\begin{equation}\label{eq:equilibrium}
  \mu^*_i \propto \exp\!\bigl((q_i - \gamma \mu^*_i) / \lambda\bigr).
\end{equation}
This is a potential game~\cite{rosenthal1973class} with Rosenthal potential
\begin{equation}\label{eq:potential}
  \Psi(\mu) = \sum_{i=1}^M \Bigl[-q_i \mu_i + \frac{\gamma}{2} \mu_i^2 + \lambda\, \mu_i \log \mu_i\Bigr].
\end{equation}
Since $x \mapsto \gamma x^2/2$ is convex and $x \mapsto \lambda x \log x$ is strictly convex on $(0,1]$, the potential $\Psi$ is strictly convex on $\DeltaM$.

\begin{proposition}[Existence, uniqueness, interiority]\label{prop:unique}
  The MFG equilibrium with linear congestion and entropy regularization exists, is unique, and lies in the interior of\, $\DeltaM$ (all experts receive positive load).
\end{proposition}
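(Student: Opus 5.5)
The plan is to identify equilibria with minimizers of the potential $\Psi$ in \eqref{eq:potential} over $\DeltaM$, and then read off existence, uniqueness and interiority from convexity and the boundary behaviour of the entropy term.

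\emph{Step 1 (best response).} For fixed $\mu$, the map $\pi \mapsto J(\pi,\mu)$ is strictly convex on $\DeltaM$: it is linear plus $\lambda$ times negative entropy. Its unique minimizer is the Gibbs distribution $\pi_i \propto \exp(-\ell(i,\mu)/\lambda)$, which is strictly positive. This follows from the Lagrangian stationarity condition $\ell(i,\mu)+\lambda(\log(M\pi_i)+1)=\nu$, or equivalently from Gibbs' variational principle. Hence $\mu^*$ is an equilibrium if and only if it solves the fixed-point system \eqref{eq:equilibrium}. In particular, every equilibrium is automatically interior, with $\mu^*_i>0$ for all $i$.

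\emph{Step 2 (existence).} $\Psi$ is continuous on the compact set $\DeltaM$, using the convention $0\log 0=0$, so it attains a minimum. To show no minimizer lies on the boundary, suppose $\mu_j=0$ for some $j$, and pick $k$ with $\mu_k>0$. Move mass $t$ from $k$ to $j$. The directional derivative of $\lambda t\log t$ at $t=0^+$ is $-\infty$, while all other terms have bounded derivatives. So $\Psi$ strictly decreases for small $t$, contradicting minimality. The minimizer $\bar\mu$ is therefore interior. The Lagrange conditions on the relative interior read
\[
-q_i+\gamma\bar\mu_i+\lambda(\log\bar\mu_i+1)=\nu \quad \text{for all } i,
\]
which is exactly \eqref{eq:equilibrium}. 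By Step 1, $\bar\mu$ is an equilibrium.

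\emph{Step 3 (uniqueness).} Conversely, any equilibrium $\mu^*$ is interior by Step 1 and satisfies the above KKT conditions. Since $\Psi$ is convex on $\DeltaM$, these first-order conditions characterize global minimizers, so $\mu^*$ minimizes $\Psi$. Strict convexity, noted just before the proposition, gives a unique minimizer, hence a unique equilibrium. As a cross-check avoiding the potential, one can take two solutions $\mu,\mu'$ of \eqref{eq:equilibrium}. Subtracting their log-equations and pairing with $\mu-\mu'$ (which sums to zero, so the normalizing constants drop out) gives
\[
\sum_i(\mu_i-\mu'_i)\bigl[\gamma(\mu_i-\mu'_i)+\lambda(\log\mu_i-\log\mu'_i)\bigr]=0 .
\]
Each summand is nonnegative, so all vanish, and thus $\mu=\mu'$.

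The main obstacle is Step 2's boundary argument: one must justify that the $-\infty$ slope of the entropy term dominates the bounded gradients of the linear and quadratic terms. Everything else is standard convex-analysis bookkeeping.
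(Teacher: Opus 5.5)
Your proof is correct, and its backbone is the paper's. You minimize the strictly convex potential $\Psi$ over $\DeltaM$, use the $-\infty$ slope of $\lambda x\log x$ at $0$ to rule out boundary minimizers, and read \eqref{eq:equilibrium} off the interior Lagrange conditions.

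The difference is that you make explicit what the paper leaves implicit. The paper's proof only shows that $\Psi$ has a unique interior minimizer satisfying \eqref{eq:equilibrium}. It never verifies that equilibria in the sense of Definition~\ref{def:mfg-eq} are exactly the solutions of \eqref{eq:equilibrium}. Nor does it verify that every such solution minimizes $\Psi$, which is precisely what is needed to pass from ``unique minimizer'' to ``unique equilibrium.'' Your Step~1 (Gibbs best response) and Step~3 (first-order conditions suffice for convex $\Psi$) supply both directions. Step~1 also yields interiority of every equilibrium without using the potential.

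Your mass-transfer argument along $e_j-e_k$ is the rigorous form of the paper's appeal to a KKT inequality at a boundary point where $\partial\Psi/\partial\mu_k$ is not finite. Finally, your monotonicity cross-check is the Lasry--Lions uniqueness argument and needs no potential. Uniqueness would therefore persist for monotone but non-symmetric (hence non-potential) congestion couplings, where the paper's strict-convexity route is unavailable.
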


\begin{proof}
  \emph{Existence and uniqueness.} $\Psi$ is strictly convex and continuous on the compact convex set $\DeltaM$, so it has a unique minimizer $\mu^*$.

  \emph{Interiority.} Suppose $\mu^*_k = 0$ for some $k$. The partial derivative $\partial(\lambda x \log x)/\partial x = \lambda(1 + \log x) \to -\infty$ as $x \to 0^+$. The congestion and quality derivatives are finite. At the minimizer on $\DeltaM$, the KKT condition requires $\partial\Psi/\partial\mu_k \geq \min_j \partial\Psi/\partial\mu_j$ for any $k$ with $\mu^*_k = 0$. But $\partial\Psi/\partial\mu_k \to -\infty$ violates this. Hence $\mu^*_i > 0$ for all $i$.

  \emph{Equilibrium characterization.} Since $\mu^*$ is interior, the KKT conditions give $-q_i + \gamma\mu^*_i + \lambda(1 + \log\mu^*_i) = \nu$ for all $i$. Solving: $\mu^*_i \propto \exp\!\bigl((q_i - \gamma\mu^*_i)/\lambda\bigr)$.
\end{proof}

\begin{remark}[MoE isomorphism]\label{rem:isomorphism}
  Under the mean-field identification $f_i \approx P_i \approx \mu_i$, the Switch auxiliary loss reduces to $L_{\mathrm{aux}} = \alpha M \sum_i \mu_i^2$, which is identical to the congestion term $\gamma \sum_i \mu_i^2$ of the MFG social cost under $\gamma_{\mathrm{explicit}} = \alpha M$.
\end{remark}

\subsection{The softmax equivalence}

\begin{theorem}[Softmax equivalence]\label{thm:softmax-equiv}
  The single-type MFG equilibrium satisfies $\mu^* = \mathrm{softmax}(\tilde{q}/\lambda)$ where $\tilde{q}_i = q_i - \gamma\mu^*_i$. For well-balanced models where $\mu^*_i \approx 1/M$ for all $i$, the congestion term $\gamma\mu^*_i \approx \gamma/M$ is nearly constant across experts and cancels in the softmax normalization. In this regime:
  \begin{equation}\label{eq:softmax-equiv}
    \mu^* \approx \mathrm{softmax}(q/\lambda),
  \end{equation}
  and the congestion game reduces to temperature-scaled softmax with $T = \lambda$.
\end{theorem}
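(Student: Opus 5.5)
The argument splits into an exact identity and a perturbation estimate.

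\emph{Exact identity.} By Proposition~\ref{prop:unique}, the equilibrium $\mu^*$ is interior. The KKT conditions therefore give
\[
-q_i+\gamma\mu^*_i+\lambda(1+\log\mu^*_i)=\nu
\]
for every $i$. Exponentiating and normalizing over $i$ yields $\mu^*_i = \exp(\tilde q_i/\lambda)/\sum_j \exp(\tilde q_j/\lambda)$ with $\tilde q_i=q_i-\gamma\mu^*_i$, which is exactly $\mu^*=\mathrm{softmax}(\tilde q/\lambda)$. This part is just a restatement of \eqref{eq:equilibrium}.

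\emph{Approximation in the balanced regime.} Write $\mu^*_i = 1/M + \delta_i$ with $\sum_i\delta_i=0$. Softmax is invariant under adding a constant vector, so $\mathrm{softmax}(\tilde q/\lambda)=\mathrm{softmax}((q-\gamma\delta)/\lambda)$, because the common shift $\gamma/M$ cancels.

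Next I use that $\log\mathrm{softmax}$ is $1$-Lipschitz from $\ell^\infty$ to $\ell^\infty$: a perturbation $v$ changes each log-probability by at most $2\|v\|_\infty$, by a crude but sufficient bound. Hence
\[
\bigl|\log\mu^*_i-\log\mathrm{softmax}(q/\lambda)_i\bigr|\le 2\gamma\|\delta\|_\infty/\lambda,
\]
and consequently
\[
\|\mu^*-\mathrm{softmax}(q/\lambda)\|_1\le e^{2\gamma\|\delta\|_\infty/\lambda}-1=O(\gamma\|\delta\|_\infty/\lambda).
\]
This is the quantitative content of ``$\approx$'': the error is small whenever $\gamma\max_i|\mu^*_i-1/M|\ll\lambda$.

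\emph{Main obstacle.} The hypothesis ``$\mu^*_i\approx 1/M$'' is stated in terms of the unknown equilibrium itself. I would therefore also give a sufficient condition on the data. From the KKT identity, if $\mu^*_i>\mu^*_j$ then $\lambda\log(\mu^*_i/\mu^*_j)+\gamma(\mu^*_i-\mu^*_j)=q_i-q_j$. Both terms on the left have the same sign, so $|\log(\mu^*_i/\mu^*_j)|\le(q_i-q_j)/\lambda$. This gives $\max_i\mu^*_i/\min_j\mu^*_j\le e^{(\max q-\min q)/\lambda}$, and hence $\|\delta\|_\infty\le e^{\Delta q/\lambda}-1$ with $\Delta q=\max q-\min q$. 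The approximation thus holds when $\gamma(e^{\Delta q/\lambda}-1)/\lambda$ is small.

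I expect this last step, making the regime precise without circularity, to be the delicate part. The statement itself is heuristic, so I would present the exact identity as the theorem and the error bound as the precise meaning of the approximation.
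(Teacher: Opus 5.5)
Your argument is correct and shares the paper's skeleton: the fixed-point identity, the split $\mu^*_i=1/M+\delta_i$, and cancellation of $\gamma/M$ by shift invariance. Where you differ is the last step. The paper simply declares the residual $\gamma\delta_i$ negligible under a \emph{relative} condition, $\gamma|\delta_i|\ll|q_i-\bar q|$. You instead prove an explicit $\ell^1$ bound under an \emph{absolute} condition, $\gamma\|\delta\|_\infty\ll\lambda$.

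Your version buys an actual inequality, and it can even be made linear. Integrating the softmax Jacobian gives $\|\mathrm{softmax}(x+v)-\mathrm{softmax}(x)\|_1\le\frac12(\max_i v_i-\min_i v_i)$, hence $\|\mu^*-\mathrm{softmax}(q/\lambda)\|_1\le\frac{\gamma}{2\lambda}(\max_i\mu^*_i-\min_i\mu^*_i)$.

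Be aware, though, of what absolute closeness does not capture. The KKT identity forces $\gamma(\max_i\mu^*_i-\min_i\mu^*_i)\le\Delta q$, so that bound is at most $\Delta q/(2\lambda)$ for \emph{every} $\gamma\ge 0$. When $\Delta q\ll\lambda$ it therefore holds automatically, and the uniform distribution satisfies it equally well. So it cannot certify that the temperature is $\lambda$.

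The paper's relative condition is the one that does. Linearizing the fixed point gives $\delta_i\approx(q_i-\bar q)/(M\lambda+\gamma)$, with $\bar q$ the mean quality. That is, $\mu^*\approx\mathrm{softmax}\bigl(q/(\lambda+\gamma/M)\bigr)$. So $T=\lambda$ is right precisely when $\gamma\ll M\lambda$, which is what $\gamma|\delta_i|\ll|q_i-\bar q|$ reduces to at first order.

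There are two minor slips, neither fatal.
\begin{enumerate}
\item $\log\mathrm{softmax}$ is not $1$-Lipschitz from $\ell^\infty$ to $\ell^\infty$. Each log-probability moves by at most $\max_i v_i-\min_i v_i\le 2\|v\|_\infty$, so the constant $2$ you actually use is the right one.
\item Your data-level estimate loses a factor of $M$. Because $\min_j\mu^*_j\le 1/M\le\max_i\mu^*_i$, the ratio bound yields $\|\delta\|_\infty\le(e^{\Delta q/\lambda}-1)/M$. Your sufficient condition then becomes $\gamma(e^{\Delta q/\lambda}-1)/(\lambda M)\ll 1$, which is consistent with the $\gamma\ll M\lambda$ scale above.
\end{enumerate}
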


\begin{proof}
  The equilibrium condition~\eqref{eq:equilibrium} gives $\mu^*_i = Z^{-1}\exp\!\bigl((q_i - \gamma\mu^*_i)/\lambda\bigr)$. Write $\mu^*_i = 1/M + \delta_i$ where $\sum_i \delta_i = 0$ and $|\delta_i| \ll 1/M$. Then $\gamma\mu^*_i = \gamma/M + \gamma\delta_i$. The constant $\gamma/M$ cancels in the softmax normalization. The residual enters as:
  \[
    \mu^*_i = \frac{\exp\!\bigl((q_i - \gamma\delta_i)/\lambda\bigr)}{\sum_j \exp\!\bigl((q_j - \gamma\delta_j)/\lambda\bigr)}.
  \]
  When $\gamma|\delta_i|/\lambda \ll |q_i - \bar{q}|/\lambda$ (quality variation dominates the congestion perturbation), the $\gamma\delta_i$ terms are negligible and $\mu^* \approx \mathrm{softmax}(q/\lambda)$.
\end{proof}

\begin{remark}[Significance]\label{rem:significance}
  This is the paper's central honesty point. The single-type equilibrium \emph{is} temperature-scaled softmax for well-balanced models. The game does not outperform softmax as a load predictor. But it tells us \emph{why} softmax works (unique equilibrium of a potential game), \emph{what the temperature means} (the congestion coefficient), and how that parameter evolves during training.
\end{remark}

\section{Effective Congestion and Training Dynamics}
\label{sec:dynamics}

This section presents the paper's main contribution. We define the effective congestion parameter, prove it is identifiable from routing traces, and show that tracking it across training reveals a three-phase trajectory invisible to static analysis.

\subsection{The effective congestion parameter}

A pretrained MoE model has absorbed balance through both the explicit auxiliary loss and the implicit dynamics of gradient descent. The effective congestion $\gamma_{\mathrm{eff}}$ captures the \emph{total} balance at any given checkpoint.

\begin{definition}[Effective congestion]\label{def:gamma-eff}
  Given an observed load distribution $\mu^{\mathrm{obs}} \in \mathrm{int}(\DeltaM)$ and an estimated quality vector $q \in \R^M$, the \emph{effective congestion} is
  \begin{equation}\label{eq:gamma-eff}
    \gamma_{\mathrm{eff}} = \argmin_{\gamma \geq 0} \|\Phi_\gamma(\mu^{\mathrm{obs}}) - \mu^{\mathrm{obs}}\|_1,
  \end{equation}
  where $\Phi_\gamma(\mu)_i = \mathrm{softmax}\!\bigl((q_i - \gamma\mu_i)/\lambda\bigr)$ is the best-response map.
\end{definition}

\begin{theorem}[Identification]\label{thm:identification}
  For any $\mu^{\mathrm{obs}} \in \mathrm{int}(\DeltaM)$ and $q \in \R^M$ with $q \neq c\mathbf{1}$ (non-constant quality):
  \begin{enumerate}[label=(\roman*)]
    \item There exists a unique $\gamma_{\mathrm{eff}} \geq 0$ minimizing $\|\Phi_\gamma(\mu^{\mathrm{obs}}) - \mu^{\mathrm{obs}}\|_1$.
    \item The minimum is zero if and only if $\mu^{\mathrm{obs}}$ is exactly an MFG equilibrium.
    \item $\gamma_{\mathrm{eff}}$ is continuous in both $\mu^{\mathrm{obs}}$ and $q$.
  \end{enumerate}
\end{theorem}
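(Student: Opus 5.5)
The plan is to study the scalar function $f(\gamma) = \|\Phi_\gamma(\mu^{\mathrm{obs}}) - \mu^{\mathrm{obs}}\|_1$ on $[0,\infty)$ through the curve $p(\gamma) := \Phi_\gamma(\mu^{\mathrm{obs}})$. Writing $\mu = \mu^{\mathrm{obs}}$, we have $p(\gamma)_i \propto \exp\bigl((q_i - \gamma\mu_i)/\lambda\bigr)$. This is a one-parameter exponential family with base measure $e^{q/\lambda}$, natural parameter $-\gamma/\lambda$ and sufficient statistic $\mu$. Its log-partition function is therefore convex in $\gamma$, and differentiating gives
\[
  \frac{d}{d\gamma}\,\mathbb{E}_{p(\gamma)}[\mu] \;=\; -\frac{1}{\lambda}\,\mathrm{Var}_{p(\gamma)}(\mu) \;\le\; 0,
\]
with strict inequality whenever $\mu$ is non-constant. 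So $\gamma \mapsto p(\gamma)$ is a smooth, injective curve in $\mathrm{int}(\DeltaM)$ as soon as $\mu \neq \mathbf{1}/M$. This injectivity is the structural fact everything else rests on.

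I would prove (ii) first, since it needs only Proposition~\ref{prop:unique}. If $f(\gamma)=0$ then $p(\gamma)=\mu$, so $\mu_i \propto \exp\bigl((q_i-\gamma\mu_i)/\lambda\bigr)$. This is exactly the fixed-point system~\eqref{eq:equilibrium}, so $\mu$ is the unique MFG equilibrium for that $\gamma$. Conversely, if $\mu$ is the equilibrium for some $\gamma_0 \ge 0$, then $f(\gamma_0)=0$. Taking logs in the fixed-point system gives $\lambda\log\mu_i + \gamma\mu_i - q_i = \text{const}$. Two such $\gamma$ would force $\mu$ to be constant, hence $q$ constant, which the hypothesis $q\neq c\mathbf{1}$ excludes. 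So the zero, when it exists, is attained at exactly one $\gamma$.

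For (i), existence comes from continuity plus the behaviour at infinity. The function $f$ is continuous, and as $\gamma\to\infty$, $p(\gamma)$ converges to the uniform distribution on $\argmin_i \mu_i$ (weighted by $e^{q_i/\lambda}$). Hence $f(\gamma)\to f_\infty$, and the minimum is attained on $[0,\infty)$ unless the infimum equals $f_\infty$ and is approached only asymptotically. I would rule this out by showing $f$ is eventually increasing toward its limit. For large $\gamma$ the off-minimum coordinates of $p(\gamma)$ decay like $e^{-\gamma(\mu_i - \mu_{\min})/\lambda}$, so $f(\gamma)$ approaches $f_\infty$ from below with an exponentially small correction of definite sign; I expect to have to check this sign case by case.

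Uniqueness of the minimizer is the main obstacle. It is immediate in the exact-equilibrium case by the previous paragraph, but not in general: the $L^1$ distance composed with a nonlinear curve need not be unimodal. If $\mu^{\mathrm{obs}}$ is uniform, $f$ is even constant in $\gamma$, so a nondegeneracy assumption ($\mu^{\mathrm{obs}}$ non-constant) must be added. My first attempt would use convexity of $\|\cdot - \mu\|_1$ together with the monotone ordering of $p(\gamma)$. Since $d p_i/d\gamma = -p_i(\mu_i - \mathbb{E}_p\mu)/\lambda$, each sign change of $p_i(\gamma)-\mu_i$ happens at most once in a controlled order. This makes $f$ piecewise with monotone pieces, and I would try to show $f'$ changes sign only once. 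If that fails, I would fall back to uniqueness for generic $(\mu,q)$ or near equilibrium, where $f(\gamma)\approx |\gamma-\gamma_0|\,\|p'(\gamma_0)\|_1$ locally.

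Finally, (iii) follows from Berge's maximum theorem once uniqueness and a locally uniform compact range for the minimizer are established. The objective is jointly continuous in $(\gamma,\mu,q)$, so a unique argmin over a compact set varies continuously with $(\mu,q)$.
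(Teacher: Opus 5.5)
\textbf{What is right.} Your (ii) matches the paper's, and your remark that two zeros of $f$ would force $\mu^{\mathrm{obs}}$ uniform and $q$ constant correctly gives uniqueness whenever $\mu^{\mathrm{obs}}$ is an exact equilibrium.

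Existence needs no case check. For non-uniform $\mu^{\mathrm{obs}}$, let $S=\argmin_i\mu^{\mathrm{obs}}_i$ and $s_i=\mathrm{sgn}(p_i-\mu^{\mathrm{obs}}_i)$. For large $\gamma$ every expert outside $S$ is under-predicted, so $f'=\sum_{i\in S}(1+s_i)p_i'$ with $p_i'>0$ on $S$. Since $\sum_{i\in S}p_i\to 1>\sum_{i\in S}\mu^{\mathrm{obs}}_i$, some $s_i$ is eventually $+1$. Hence $f$ increases strictly to its limit $\|p(\infty)-\mu^{\mathrm{obs}}\|_1<2$ (not $2$, as the paper asserts), and the minimum is attained.

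\textbf{The gap.} Uniqueness in (i) cannot be closed, because it is false even for non-uniform $\mu^{\mathrm{obs}}$. Your plan, like the paper's proof, assumes two things: that each $p_i-\mu^{\mathrm{obs}}_i$ changes sign at most once, and that this forces $f'$ to change sign once. The first fails for intermediate-load experts. By your own formula, $\frac{d}{d\gamma}\log p_i=-(\mu^{\mathrm{obs}}_i-\mathbb{E}_{p(\gamma)}[\mu^{\mathrm{obs}}])/\lambda$ is decreasing, so such an expert's share can rise and then fall, crossing $\mu^{\mathrm{obs}}_i$ twice. The second inference fails even when every expert crosses only once.

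\textbf{Counterexample.} Take $M=3$, $\lambda=1$, $\mu^{\mathrm{obs}}=(0.25,0.36,0.39)$ and $q=(\log 0.07+2.5,\ \log 0.54+3.6,\ \log 0.39+3.9)$, so that $p(10)=(0.07,0.54,0.39)$.
\begin{enumerate}[label=(\alph*)]
  \item On $[0,10]$, $f=2(0.25-p_1)$ decreases to $f(10)=0.36$.
  \item On the next stretch only expert $2$ is over-predicted, so $f=2(p_2-0.36)$. It rises to $f(13)\approx 0.367$ and then falls to $0.28$ at $\gamma\approx 22.3$, where $p=(0.25,0.5,0.25)$.
  \item Afterwards $f$ increases.
\end{enumerate}
Each expert crosses exactly once on $[0,\infty)$, yet $f$ has two strict local minima.

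Now replace $(0.07,0.54)$ by $(\alpha,0.61-\alpha)$. Both local minima persist for every $\alpha\in(0,0.1)$, while their values swap order: about $0.498$ versus $0.59$ at $\alpha=0.001$, and $0.36$ versus $0.28$ at $\alpha=0.07$. So at some $\alpha$ near $0.007$ there are two global minimizers. As $\alpha$ (hence $q$) crosses that value, $\gamma_{\mathrm{eff}}$ jumps from $10$ to about $42$, which also refutes (iii).

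Continuity likewise fails as $\mu^{\mathrm{obs}}$ approaches uniform, where the minimizer can escape to infinity. So Berge's theorem cannot be applied on $[0,\infty)$ without a boundedness argument.

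\textbf{What survives.} One can prove existence, (ii), uniqueness when $\min f=0$, and continuity at inputs where the minimizer is unique and nearby minimizers stay bounded. The theorem must be weakened along the lines of your fallback; the paper's unimodality argument is not valid.
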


\begin{proof}
  \emph{(i) Uniqueness.} Fix $\mu^{\mathrm{obs}} \in \mathrm{int}(\DeltaM)$. For each expert $i$, the logit $h_i(\gamma) = (q_i - \gamma\mu^{\mathrm{obs}}_i)/\lambda$ is affine in $\gamma$ with slope $-\mu^{\mathrm{obs}}_i/\lambda$. Experts with larger load see their logit decrease faster. As $\gamma$ increases, $\Phi_\gamma(\mu^{\mathrm{obs}})$ shifts mass from high-load to low-load experts. For $i, j$ with $\mu^{\mathrm{obs}}_i > \mu^{\mathrm{obs}}_j$:
  \[
    \frac{\partial}{\partial\gamma}\log\frac{\Phi_\gamma(\mu^{\mathrm{obs}})_i}{\Phi_\gamma(\mu^{\mathrm{obs}})_j} = \frac{-(\mu^{\mathrm{obs}}_i - \mu^{\mathrm{obs}}_j)}{\lambda} < 0.
  \]

  \emph{Boundary behavior.} At $\gamma = 0$: $\Phi_0 = \mathrm{softmax}(q/\lambda)$. As $\gamma \to \infty$: $\Phi_\gamma$ concentrates on $\argmin_i \mu^{\mathrm{obs}}_i$. The residual $R(\gamma) = \|\Phi_\gamma(\mu^{\mathrm{obs}}) - \mu^{\mathrm{obs}}\|_1$ is continuous with $R(0) > 0$ generically and $R(\gamma) \to 2$ as $\gamma \to \infty$.

  \emph{Unimodality.} The function $R(\gamma)$ is unimodal (first decreasing, then increasing), which gives a unique global minimizer. To see this: decompose $R = R^+ + R^-$ where $R^+ = \sum_{i: \Phi_i > \mu_i^{\mathrm{obs}}} (\Phi_i - \mu_i^{\mathrm{obs}})$ (experts that receive more than observed) and $R^- = \sum_{i: \Phi_i < \mu_i^{\mathrm{obs}}} (\mu_i^{\mathrm{obs}} - \Phi_i)$ (experts that receive less). Since $\sum \Phi_i = \sum \mu_i^{\mathrm{obs}} = 1$, we have $R^+ = R^- = R/2$. As $\gamma$ increases from 0, the softmax $\Phi_\gamma$ monotonically shifts mass from high-load to low-load experts (by the log-ratio derivative above). Starting from $\Phi_0 = \mathrm{softmax}(q/\lambda)$, this shift initially brings $\Phi_\gamma$ closer to $\mu^{\mathrm{obs}}$ (decreasing $R$), but once $\Phi_\gamma$ passes through $\mu^{\mathrm{obs}}$, further shifting moves it away (increasing $R$). The monotonicity of the mass transfer ensures each expert crosses from over-predicted to under-predicted (or vice versa) at most once as $\gamma$ increases, so $R(\gamma)$ has a unique minimum.

  \emph{(ii)} If $\mu^{\mathrm{obs}}_i \propto \exp\!\bigl((q_i - \gamma^*\mu^{\mathrm{obs}}_i)/\lambda\bigr)$ for some $\gamma^*$, then $R(\gamma^*) = 0$. Conversely, $R(\gamma_{\mathrm{eff}}) = 0$ implies $\mu^{\mathrm{obs}}$ is a fixed point of $\Phi_{\gamma_{\mathrm{eff}}}$, hence an MFG equilibrium.

  \emph{(iii)} Continuity of the minimizer follows from Berge's maximum theorem applied to the continuous objective $R(\gamma, \mu^{\mathrm{obs}}, q)$.
\end{proof}

\subsection{The effective congestion decomposition}
\label{sec:effective}

\begin{definition}[Decomposition]\label{def:decomp}
  Given an MoE model with auxiliary loss coefficient $\alpha$ and $M$ experts:
  \begin{equation}\label{eq:decomp}
    \gamma_{\mathrm{eff}} = \gamma_{\mathrm{explicit}} + \gamma_{\mathrm{implicit}}, \qquad \gamma_{\mathrm{explicit}} = \alpha \cdot M.
  \end{equation}
  The implicit congestion $\gamma_{\mathrm{implicit}} = \gamma_{\mathrm{eff}} - \gamma_{\mathrm{explicit}}$ captures balance internalized during training beyond the explicit loss.
\end{definition}

\begin{remark}[Implicit dominance]
  When $\gamma_{\mathrm{implicit}} \gg \gamma_{\mathrm{explicit}}$, the router has learned to balance through its weights far beyond what the auxiliary loss alone induces. The explicit loss is a seed; the optimizer grows the balance internally.
\end{remark}

\subsection{Three-phase training dynamics}
\label{sec:three-phase}

We track $\gamma_{\mathrm{eff}}$ across 20 training checkpoints of OLMoE-1B-7B, spanning from step 5K to the final model at step 1.22M. We sample densely in the surge region (every 5K steps from 5K to 50K) to resolve the phase transition at high resolution. At each checkpoint, we process 50 texts (673 tokens), estimate per-layer quality vectors from gate logits, and fit $\gamma_{\mathrm{eff}}$ using Definition~\ref{def:gamma-eff}. Confidence intervals are from bootstrap resampling over the 50 text batches. We report layer-averaged quantities.

\paragraph{The trajectory.} Figure~\ref{fig:dynamics} and Table~\ref{tab:dynamics} report the full trajectory. The effective congestion follows a non-monotone path with three distinct phases.

\begin{figure}[t]
\centering
\includegraphics[width=\columnwidth]{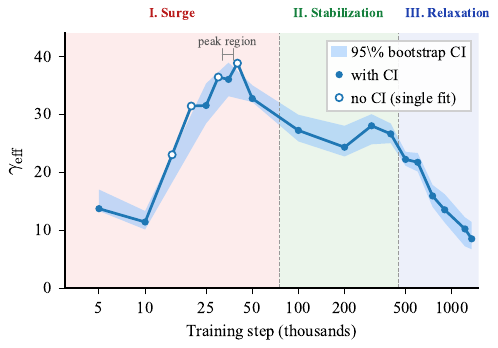}
\caption{Effective congestion $\gamma_{\mathrm{eff}}$ across 20 training checkpoints of OLMoE-1B-7B. The three-phase trajectory---surge, stabilization, relaxation---is the paper's central finding. Shaded band: 95\% bootstrap CIs (where available). Open circles: dense-sample checkpoints (20 texts, no CI). The inverted-U shape, with a ${\geq}\,4.2\times$ peak-to-final ratio, is invisible to analysis of the converged model alone.}
\label{fig:dynamics}
\end{figure}

\begin{table}[t]
\centering
\caption{Training dynamics of OLMoE-1B-7B across 20 checkpoints. The surge region (steps 5K--50K) is sampled at 5K resolution. $\gamma_{\mathrm{eff}}$: effective congestion (layer average; 95\% bootstrap CIs from 50-text resampling where shown). $B_0$: expert quality spread. $H$: normalized routing entropy.}
\label{tab:dynamics}
\begin{tabular}{llccc}
\toprule
Step & Phase & $\gamma_{\mathrm{eff}}$ & $B_0$ & $H$ \\
\midrule
5K    & \multirow{10}{*}{Surge}        & 13.7~[13.3, 17.0] & 4.10 & 0.923 \\
10K   &                                & 11.4~[10.1, 13.3] & 3.65 & 0.943 \\
15K   &                                & 23.0 & 3.51 & 0.954 \\
20K   &                                & 31.4 & 3.34 & 0.962 \\
25K   &                                & 31.5~[28.4, 35.3] & 3.09 & 0.969 \\
30K   &                                & 36.4 & 2.98 & 0.970 \\
\textbf{35K}   &                       & \textbf{36.0}~[33.1, 38.9] & 2.78 & 0.974 \\
40K   &                                & 38.8 & 2.74 & 0.971 \\
45K   &                                & 37.7 & 2.69 & 0.971 \\
50K   &                                & 32.7~[32.1, 35.0] & 2.62 & 0.973 \\
\midrule
100K  & \multirow{4}{*}{Stabilization} & 27.2~[25.3, 29.9] & 2.41 & 0.970 \\
200K  &                                & 24.3~[22.7, 28.0] & 2.17 & 0.980 \\
300K  &                                & 28.0~[24.8, 30.0] & 2.25 & 0.980 \\
400K  &                                & 26.6~[25.0, 28.4] & 2.25 & 0.980 \\
\midrule
500K  & \multirow{6}{*}{Relaxation}    & 22.2~[21.1, 23.5] & 2.23 & 0.980 \\
600K  &                                & 21.7~[20.1, 23.3] & 2.27 & 0.979 \\
750K  &                                & 15.9~[14.0, 17.8] & 2.19 & 0.979 \\
900K  &                                & 13.5~[11.3, 16.2] & 2.21 & 0.977 \\
1.22M &                                & 10.2~[7.2, 12.2]  & 2.24 & 0.975 \\
Final &                                & 8.5~[6.7, 11.4]   & 2.24 & 0.974 \\
\bottomrule
\end{tabular}
\end{table}

\paragraph{Phase 1: Surge (steps 5K--50K).} Dense sampling at 5K resolution reveals a continuous, smooth surge. $\gamma_{\mathrm{eff}}$ rises from $13.7$ (step 5K) through $23.0 \to 31.4 \to 36.4$ to a peak region of $36$--$39$ at steps 30K--40K, before declining to $32.7\,[32.1, 35.0]$ by step 50K. The bootstrapped step 35K estimate ($36.0\,[33.1, 38.9]$, 50 texts) is consistent with the surrounding dense-sample values (36.4, 38.8 from 20 texts); the exact peak step is not resolved, but the peak CI does not overlap with the starting CI ($[13.3, 17.0]$ at step 5K), confirming the surge is signal, not noise. Routing entropy climbs from $0.923$ to $0.974$. The quality spread $B_0$ drops sharply from $4.10$ to $2.62$ as experts begin converging.

The high-resolution sampling places the peak in the 30K--40K region (approximately 125--167B tokens), after which the router begins relaxing even while still in the early training phase. (The transient dip at step 10K---$\gamma_{\mathrm{eff}} = 11.4\,[10.1, 13.3]$ vs.\ $13.7\,[13.3, 17.0]$ at step 5K---has overlapping CIs and is within sampling noise.)

\paragraph{Phase 2: Stabilization (steps 100K--400K).} The effective congestion holds steady: $\gamma_{\mathrm{eff}}$ varies between 24.3 and 28.0, with CIs overlapping throughout. The quality-balance tradeoff has reached a temporary equilibrium. Underneath this stable $\gamma_{\mathrm{eff}}$, experts continue to specialize: $B_0$ drops from 2.41 to 2.25. Routing entropy saturates at $H \approx 0.980$.

The stabilization reveals a \emph{decoupling}: the router's tradeoff parameter holds steady while experts differentiate. The router has found its operating point; expert learning proceeds within this constraint.

\paragraph{Phase 3: Relaxation (steps 500K--final).} As expert roles solidify, the router loosens balance enforcement. $\gamma_{\mathrm{eff}}$ declines from 22.2 to 8.5---a drop of $62\%$. The CIs separate cleanly: $[21.1, 23.5]$ at step 500K versus $[6.7, 11.4]$ at convergence. The quality spread $B_0$ is flat at $\sim 2.2$, entropy drifts down slightly ($0.980 \to 0.974$), and the number of layers above $\gamma_c$ decreases from 12/16 to 9/16.

The relaxation reflects a qualitative shift: once experts have established their specializations, the router gains more from directing tokens to the \emph{right} expert than from distributing them evenly.

\paragraph{The non-monotonicity is the finding.} The peak-to-final ratio is $\geq 4.2\times$ ($36.0/8.5$, using the bootstrapped step-35K estimate; the true peak is likely higher since unbootstrapped values at steps 30K and 40K exceed 36.0). The trajectory is not an artifact of changing quality spreads: $B_0$ decreases monotonically throughout, while $\gamma_{\mathrm{eff}}$ first rises, then falls. During Phase~2, $B_0$ drops by 7\% (from 2.41 to 2.25) while $\gamma_{\mathrm{eff}}$ barely moves. During Phase~3, $B_0$ is flat while $\gamma_{\mathrm{eff}}$ drops by 62\%. The two quantities are decoupled.

The pattern is not an artifact of layer-averaging: per-layer analysis shows 12 of 16 layers individually exhibit the surge (early peak $> 1.5\times$ start) and 10 of 16 show relaxation (final $< 0.6\times$ mid-peak).

\subsection{Replication on OpenMoE-8B}
\label{sec:openmoe}

To test whether the three-phase pattern generalizes beyond OLMoE, we track $\gamma_{\mathrm{eff}}$ across 6 training checkpoints of OpenMoE-8B~\cite{xue2024openmoe}---a fundamentally different architecture: $M = 32$ experts, $K = 2$ (top-2), only 4 MoE layers (every 4th layer), trained on 1.1T tokens.

\begin{table}[t]
\centering
\caption{Training dynamics of OpenMoE-8B across 6 checkpoints. The three-phase pattern replicates: a dormant phase (200B--600B), a surge (600B--1T), and an early relaxation (1T--1.1T). 30 texts per checkpoint, 4 MoE layers.}
\label{tab:openmoe-dynamics}
\begin{tabular}{lcccc}
\toprule
Tokens & $\gamma_{\mathrm{eff}}$ & $B_0$ & $H$ & Phase \\
\midrule
200B & 0.0  & 2.86 & 0.952 & \multirow{3}{*}{Dormant} \\
400B & 0.0  & 2.99 & 0.925 & \\
600B & 0.0  & 3.12 & 0.925 & \\
\midrule
800B & 3.3  & 2.72 & 0.961 & Surge \\
1T   & 35.6 & 2.00 & 0.969 & \\
\midrule
1.1T & 27.3 & 1.71 & 0.969 & Relaxation \\
\bottomrule
\end{tabular}
\end{table}

Table~\ref{tab:openmoe-dynamics} shows the same inverted-U shape as OLMoE, with two differences. First, OpenMoE has a \emph{dormant} phase (200B--600B) where $\gamma_{\mathrm{eff}} = 0$---the router has not yet learned to balance, and the congestion model finds no structure. This may reflect the sparser MoE architecture (only 4 MoE layers vs.\ 16) requiring more training to develop routing patterns. Second, the surge is more abrupt: $\gamma_{\mathrm{eff}}$ jumps from 0 to $35.6$ between 600B and 1T tokens.

The key features replicate across both models:
\begin{itemize}[leftmargin=1.5em]
\item $\gamma_{\mathrm{eff}}$ peaks during training, then declines (OLMoE: $36$--$39 \to 8.5$; OpenMoE: $35.6 \to 27.3$).
\item $B_0$ decreases monotonically as experts converge (OLMoE: $4.10 \to 2.24$; OpenMoE: $3.12 \to 1.71$).
\item Entropy increases during the surge and plateaus afterward.
\end{itemize}

The three-phase trajectory is not an artifact of one architecture. It appears in models with different expert counts ($M = 64$ vs.\ 32), routing sparsity ($K = 8$ vs.\ 2), MoE layer counts (16 vs.\ 4), and training scales (5T vs.\ 1.1T tokens).

\paragraph{Annealing is post-relaxation.} We also tracked $\gamma_{\mathrm{eff}}$ across 7 annealing checkpoints of OLMoE-1B-7B-0125 (a second training run with different data mixtures). During annealing, $\gamma_{\mathrm{eff}}$ is stable at $9.4$--$10.8$ across all checkpoints and data ingredients, showing no surge or relaxation. The three-phase pattern is specific to \emph{pretraining}; annealing operates in the post-relaxation stable regime where the routing equilibrium has already settled.

\section{Multi-Type MFG for Heterogeneous Tokens}
\label{sec:multitype}

The single-type model treats all tokens as exchangeable. In practice, tokens carry different representations that interact with experts differently. The multi-type extension models this heterogeneity and is the framework's strongest theoretical contribution beyond the softmax equivalence.

\subsection{Setup}

\begin{definition}[Multi-type routing game]\label{def:multitype}
  A \emph{multi-type MoE routing game} consists of:
  \begin{itemize}[leftmargin=1.5em]
    \item $M$ experts and $K$ token types;
    \item for each type $k$: a weight $w_k > 0$ with $\sum_{k=1}^K w_k = 1$, a quality vector $q^{(k)} \in \R^M$, and a routing distribution $\mu^{(k)} \in \DeltaM$;
    \item aggregate load: $f_i = \sum_{k=1}^K w_k \mu_i^{(k)}$;
    \item per-type cost: $\ell_k(i, f) = -q_i^{(k)} + \gamma f_i$;
    \item per-type objective:
      \begin{equation}\label{eq:multitype-obj}
        J_k(\pi, f) = \sum_{i=1}^M \pi_i \ell_k(i, f) + \lambda \sum_{i=1}^M \pi_i \log(M\pi_i).
      \end{equation}
  \end{itemize}
\end{definition}

\begin{definition}[Multi-type equilibrium]\label{def:multitype-eq}
  A tuple $(\mu^{*(1)}, \ldots, \mu^{*(K)}) \in \DeltaM^K$ is a \emph{multi-type MFG equilibrium} if for each type $k$, $\mu^{*(k)}$ minimizes $J_k(\cdot, f^*)$ over $\DeltaM$, where $f^*_i = \sum_{k=1}^K w_k \mu_i^{*(k)}$.
\end{definition}

\subsection{Existence, uniqueness, and the multi-type potential}

\begin{definition}[Multi-type Rosenthal potential]\label{def:multitype-potential}
  \begin{equation}\label{eq:multitype-potential}
    \Psi(\mu^{(1)}, \ldots, \mu^{(K)}) = \frac{\gamma}{2}\sum_{i=1}^M f_i^2 - \sum_{k=1}^K w_k \sum_{i=1}^M q_i^{(k)} \mu_i^{(k)} + \lambda \sum_{k=1}^K w_k \sum_{i=1}^M \mu_i^{(k)} \log \mu_i^{(k)}.
  \end{equation}
\end{definition}

\begin{theorem}[Multi-type equilibrium]\label{thm:multitype}
  The multi-type MFG equilibrium exists, is unique, and lies in the interior of $\DeltaM^K$. Moreover:
  \begin{enumerate}[label=(\roman*)]
    \item The equilibrium is the unique minimizer of $\Psi$ on $\DeltaM^K$.
    \item At equilibrium, $\mu_i^{*(k)} \propto \exp\!\bigl((q_i^{(k)} - \gamma f_i^*) / \lambda\bigr)$ for each type $k$.
    \item \textup{(Recovery)} If $q^{(k)} = q$ for all $k$, then $\mu^{*(k)} = \mu^*$ for all $k$: the single-type equilibrium.
  \end{enumerate}
\end{theorem}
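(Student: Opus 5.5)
We will follow the proof of Proposition~\ref{prop:unique}, using the multi-type potential $\Psi$ of Definition~\ref{def:multitype-potential} as a strictly convex objective on the compact convex set $\DeltaM^K$. The plan has four steps.

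\emph{Step 1: strict convexity.} The term $\frac{\gamma}{2}\sum_i f_i^2$ is a convex quadratic composed with the linear map $(\mu^{(1)},\dots,\mu^{(K)})\mapsto f$, so it is convex. It is not strictly convex in the full variable, since different tuples can share the same $f$. The quality term is linear. The entropy term $\lambda\sum_k w_k\sum_i \mu_i^{(k)}\log\mu_i^{(k)}$ is strictly convex in the full tuple, because $w_k>0$, $\lambda>0$, and $x\log x$ is strictly convex on $[0,1]$ (with the convention $0\log 0=0$). The sum is therefore strictly convex and continuous on $\DeltaM^K$. Hence $\Psi$ has a unique minimizer.

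\emph{Step 2: interiority.} If $\mu_k^{(k')}=0$ for some coordinate, we argue as in Proposition~\ref{prop:unique}. Move a small mass $\epsilon$ within block $k'$ from a coordinate with positive mass to the zero coordinate. The entropy contribution changes by $\lambda w_{k'}\epsilon\log\epsilon + O(\epsilon)$, which is negative and dominates the $O(\epsilon)$ changes in the congestion and quality terms. This contradicts minimality, so every coordinate of the minimizer is positive.

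\emph{Step 3: the minimizer is exactly the equilibrium, giving (i) and (ii).} We compute
\[
\frac{\partial\Psi}{\partial\mu_i^{(k)}} = w_k\bigl(\gamma f_i - q_i^{(k)} + \lambda(1+\log\mu_i^{(k)})\bigr),
\]
using $\partial f_i/\partial\mu_i^{(k)}=w_k$. This is $w_k$ times the marginal cost in $J_k(\cdot,f)$ with $f$ frozen, which is the key point. Since the minimizer is interior, the KKT conditions in each block $k$ hold with a multiplier $\nu_k$. Dividing by $w_k>0$ gives $\gamma f_i - q_i^{(k)} + \lambda\log\mu_i^{(k)} = \text{const}_k$, which is (ii).

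For the equivalence with Definition~\ref{def:multitype-eq}, note that for fixed $f$, $J_k(\cdot,f)$ is strictly convex in $\pi$. Its unique minimizer is the Gibbs distribution $\pi_i\propto\exp((q_i^{(k)}-\gamma f_i)/\lambda)$. So a tuple is an equilibrium if and only if each block is this Gibbs distribution at its own aggregate $f$. This is exactly the interior stationarity system of $\Psi$.

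Because $\Psi$ is convex, stationarity is sufficient for global minimality. Equilibria are therefore precisely the minimizers of $\Psi$. The minimizer from Step 1 then supplies existence, and uniqueness follows as well.

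\emph{Step 4: recovery, giving (iii).} If $q^{(k)}=q$ for all $k$, the formula in (ii) gives the same right-hand side for every $k$, so $\mu^{*(k)}=\nu$ for a common $\nu$. Then $f^*=\sum_k w_k\nu=\nu$, and $\nu_i\propto\exp((q_i-\gamma\nu_i)/\lambda)$. This is the single-type system \eqref{eq:equilibrium}, whose solution is unique by Proposition~\ref{prop:unique}, so $\nu=\mu^*$.

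\emph{Main obstacle.} The delicate step is Step 3, the equilibrium–potential correspondence. Each player's objective treats $f$ as fixed, while $\Psi$ differentiates through $f$. This works only because the congestion term is $\frac{\gamma}{2}\sum_i f_i^2$ rather than a per-type quadratic: its gradient with respect to $\mu^{(k)}$ yields $w_k\gamma f_i$, not $2w_k\gamma f_i$ or a self-interaction term. We will verify this factor carefully. Strict convexity in Step 1 must also rest on the entropy term, since the congestion term alone does not give it.
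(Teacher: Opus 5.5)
Your proposal is correct and follows essentially the same route as the paper: strict convexity of $\Psi$ coming from the positively weighted entropy term, a unique minimizer on the compact convex set $\DeltaM^K$, interiority from the infinite slope of $x\log x$ at $0$, blockwise first-order conditions divided by $w_k$ to get (ii), and recovery by collapsing to the single-type system. Your Step~3 also makes explicit the two-way correspondence between equilibria in the sense of Definition~\ref{def:multitype-eq} and minimizers of $\Psi$ (each block is the Gibbs best response, and stationarity suffices for a global minimum of a convex function). The paper leaves this implicit, but it is what actually justifies uniqueness of the \emph{equilibrium} rather than only of the minimizer.
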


\begin{proof}
  \emph{Strict convexity.} The congestion term $\frac{\gamma}{2}\sum_i f_i^2$ is convex (each $f_i$ is linear in the joint variable). The quality term is linear. The entropy $\lambda \sum_k w_k \sum_i \mu_i^{(k)} \log \mu_i^{(k)}$ is strictly convex since $x \log x$ is strictly convex and all weights are positive. The sum is strictly convex on $\DeltaM^K$.

  \emph{Existence and uniqueness.} $\DeltaM^K$ is compact and convex; $\Psi$ is strictly convex and continuous. Hence $\Psi$ has a unique minimizer.

  \emph{Interiority.} If $\mu_{j_0}^{*(k_0)} = 0$, then $\partial\Psi/\partial\mu_{j_0}^{(k_0)} \to -\infty$ from the entropy derivative, violating the KKT conditions. Hence $\mu_i^{*(k)} > 0$ for all $i, k$.

  \emph{First-order conditions.} Since the minimizer is interior, for each type $k$ and expert $j$:
  \[
    \gamma f_j w_k - w_k q_j^{(k)} + \lambda w_k(1 + \log\mu_j^{(k)}) = \nu_k.
  \]
  Dividing by $w_k > 0$ and solving: $\mu_j^{(k)} \propto \exp\!\bigl((q_j^{(k)} - \gamma f_j)/\lambda\bigr)$, confirming~(ii).

  \emph{Recovery.} If $q^{(k)} = q$ for all $k$, the conditions become $\mu_j^{(k)} \propto \exp\!\bigl((q_j - \gamma f_j)/\lambda\bigr)$, independent of~$k$. The unique solution is $\mu^{(k)} = \mu^*$ for all $k$.
\end{proof}

\begin{remark}[Beyond softmax]
  The multi-type equilibrium couples types through the aggregate load $f_i = \sum_k w_k \mu_i^{(k)}$: each type's best response depends on all others. This coupling distinguishes the multi-type MFG from $K$ independent softmax operations. For well-balanced models where $f_i \approx 1/M$, the coupling term is nearly constant and the practical advantage over independent per-cluster softmax is small (Section~\ref{sec:exp-multitype}). The theoretical value is structural: uniqueness of the coupled equilibrium and the potential characterization.
\end{remark}

\section{Scope Characterization}
\label{sec:scope}

The MFG model is not universally applicable. This section develops three tools that characterize where the per-layer congestion model applies and where it breaks down.

\subsection{Anti-concentration bound}

\begin{definition}[Expert quality spread]
  $B_0 = \max_i q_i - \min_i q_i$.
\end{definition}

\begin{theorem}[Anti-concentration]\label{thm:anti-conc}
  At the single-type MFG equilibrium, the maximum expert load satisfies
  \begin{equation}\label{eq:anti-conc}
    \max_i \mu_i^* \leq \frac{1}{M} + \frac{B_0}{\gamma}.
  \end{equation}
  The bound drops below $1$ when $\gamma$ exceeds $\gamma_c = M B_0 / (M - 1)$.
\end{theorem}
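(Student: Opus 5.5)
The plan is to work directly from the interior first-order conditions established in the proof of Proposition~\ref{prop:unique}, and to compare the most-loaded expert with the least-loaded one. By Proposition~\ref{prop:unique}, $\mu^*$ is interior, so there is a multiplier $\nu$ with
\[
  -q_i + \gamma\mu^*_i + \lambda\bigl(1 + \log\mu^*_i\bigr) = \nu \qquad \text{for all } i.
\]
Let $a \in \argmax_i \mu^*_i$ and $b \in \argmin_i \mu^*_i$. Subtracting the condition for $b$ from the condition for $a$ eliminates both $\nu$ and the constant $\lambda$, and gives
\[
  \gamma\bigl(\mu^*_a - \mu^*_b\bigr) + \lambda\bigl(\log\mu^*_a - \log\mu^*_b\bigr) = q_a - q_b .
\]

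Next I bound each side. On the right, $q_a - q_b \le \max_i q_i - \min_i q_i = B_0$. On the left, $\mu^*_a \ge \mu^*_b > 0$, so the logarithmic term is nonnegative and can be dropped. This yields $\gamma(\mu^*_a - \mu^*_b) \le B_0$. The only point needing care is this sign argument: we discard the entropy contribution rather than estimate it, which is legitimate because it has the right sign. For $\gamma > 0$ we then get $\mu^*_a \le \mu^*_b + B_0/\gamma$. When $\gamma = 0$ the bound is vacuous, since its right-hand side is infinite.

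To finish, note that the minimum of a probability vector on $M$ coordinates is at most the average, so $\mu^*_b \le 1/M$. Hence $\max_i\mu^*_i \le 1/M + B_0/\gamma$, which is \eqref{eq:anti-conc}.

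For the threshold statement, solve $1/M + B_0/\gamma < 1$. This is equivalent to $B_0/\gamma < (M-1)/M$, i.e.\ $\gamma > MB_0/(M-1) = \gamma_c$. So the bound becomes nontrivial (strictly below $1$) exactly when $\gamma > \gamma_c$.

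I expect no real obstacle. The only subtlety is that the argument uses interiority, so that the first-order conditions hold with equality at every coordinate, and this is already provided by Proposition~\ref{prop:unique}.
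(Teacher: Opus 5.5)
Your proof is correct and follows the paper's own argument. The paper also subtracts the equilibrium conditions for the most- and least-loaded experts to get $\lambda\log(\mu^*_{i^*}/\mu^*_{j^*}) = (q_{i^*}-q_{j^*}) - \gamma(\mu^*_{i^*}-\mu^*_{j^*})$, uses nonnegativity of the logarithmic side to conclude $\gamma(\mu^*_{i^*}-\mu^*_{j^*}) \le B_0$, and then applies $\min_i \mu^*_i \le 1/M$. Your explicit justification for discarding the entropy term and your direct solution of $1/M + B_0/\gamma < 1$ are, if anything, cleaner than the paper's phrasing of those steps (the paper derives $\gamma_c$ by ``setting $\mu^*_{i^*} = 1$'').
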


\begin{proof}
  The equilibrium condition~\eqref{eq:equilibrium} gives, for any $i, j$:
  \begin{equation}\label{eq:log-ratio}
    \lambda \log \frac{\mu^*_i}{\mu^*_j} = (q_i - q_j) - \gamma(\mu^*_i - \mu^*_j).
  \end{equation}
  Let $i^* = \argmax_i \mu^*_i$ and $j^* = \argmin_i \mu^*_i$. The left side is non-negative. The right side satisfies $q_{i^*} - q_{j^*} \leq B_0$ and $\gamma(\mu^*_{i^*} - \mu^*_{j^*}) \geq 0$, forcing $\gamma(\mu^*_{i^*} - \mu^*_{j^*}) \leq B_0$. Since $\mu^*_{j^*} \leq 1/M$, we get $\mu^*_{i^*} \leq 1/M + B_0/\gamma$. Setting $\mu^*_{i^*} = 1$ gives $\gamma_c = MB_0/(M-1)$.
\end{proof}

\begin{remark}[Tracking safety during training]
  For OLMoE ($M=64$), $\gamma_c \approx 1.016\, B_0$. At the final checkpoint ($B_0 = 2.24$): $\gamma_c \approx 2.28$, while $\gamma_{\mathrm{eff}} = 8.5$---a $3.7\times$ safety margin. The margin is widest at the surge peak ($\gamma_{\mathrm{eff}} = 36.0$, $\gamma_c \approx 2.82$, margin $12.8\times$). The ratio $\gamma_{\mathrm{eff}}/\gamma_c$ provides a principled diagnostic: a precipitous drop signals impending expert collapse.
\end{remark}

\subsection{Top-$K$ approximation bound}

The MFG equilibrium assigns positive mass to all $M$ experts. Real MoE models use top-$K$ routing. How much error does this introduce?

\begin{lemma}[Best-response contraction]\label{lem:contraction}
  Let $\Phi(\mu)_i = \mathrm{softmax}\!\bigl((q_i - \gamma\mu_i)/\lambda\bigr)$. Then
  \begin{equation}\label{eq:contraction}
    \|\Phi(\mu) - \Phi(\nu)\|_1 \leq \rho \cdot \|\mu - \nu\|_1
    \quad\text{where}\quad
    \rho = \frac{\gamma}{2\lambda}.
  \end{equation}
\end{lemma}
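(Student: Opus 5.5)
The plan is to view $\Phi$ as the composition of the affine logit map $\mu \mapsto h(\mu) = (q - \gamma\mu)/\lambda$ with the softmax map $\sigma$. I will bound the Lipschitz constant of $\sigma$ from the oscillation seminorm to $\ell^1$, and separately bound that seminorm of the logit difference by $\|\mu-\nu\|_1$. The constant $\tfrac12$ in $\rho$ should come from two independent factors of $\tfrac12$ and $1$ that combine; I expect it to be exactly $\tfrac12\cdot 1$.

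\emph{Step 1 (softmax derivative).} For logits $h$ with $p = \sigma(h)$, the Jacobian is $D\sigma(h) = \mathrm{diag}(p) - pp^\top$. Hence for any direction $v \in \R^M$,
\[
  \bigl(D\sigma(h)v\bigr)_i = p_i\,(v_i - \bar v_p), \qquad \bar v_p = \textstyle\sum_j p_j v_j .
\]
This gives $\|D\sigma(h)v\|_1 = \sum_i p_i |v_i - \bar v_p|$, the mean absolute deviation of $v$ under $p$. Since $v_i$ takes values in an interval of length $\mathrm{osc}(v) = \max_i v_i - \min_i v_i$, this mean absolute deviation is at most $\mathrm{osc}(v)/2$. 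To see this, note that $\sum_i p_i|v_i - \bar v_p| = 2\sum_i p_i (v_i - \bar v_p)^+$, and this is maximized by a two-point distribution at the endpoints with mass $\tfrac12$ each. This is the step requiring a short but careful argument; I would prove it via the identity above and $(v_i-\bar v_p)^+ \le \max v - \bar v_p$ combined with the symmetric bound using the negative parts.

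\emph{Step 2 (integrate along the segment).} Set $h_t = h(\nu) + t\,(h(\mu) - h(\nu))$ and $v = h(\mu)-h(\nu) = -\gamma(\mu-\nu)/\lambda$. The direction $v$ is constant along the path. Writing $\Phi(\mu)-\Phi(\nu) = \int_0^1 D\sigma(h_t)\,v\,dt$ and applying Step 1 pointwise in $t$ gives $\|\Phi(\mu)-\Phi(\nu)\|_1 \le \tfrac12\,\mathrm{osc}(v) = \tfrac{\gamma}{2\lambda}\,\mathrm{osc}(\mu-\nu)$.

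\emph{Step 3 (oscillation vs.\ $\ell^1$ on the simplex).} Since $\mu,\nu\in\DeltaM$, the difference $d = \mu-\nu$ satisfies $\sum_i d_i = 0$. Hence $\sum_i d_i^+ = \sum_i d_i^- = \|d\|_1/2$, which gives $\max_i d_i \le \|d\|_1/2$ and $\min_i d_i \ge -\|d\|_1/2$. Therefore $\mathrm{osc}(d) \le \|d\|_1$, and combining with Step 2 yields~\eqref{eq:contraction} with $\rho = \gamma/(2\lambda)$.

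The main obstacle is Step 1: a naive bound ($\|D\sigma\|_{\infty\to1}\le 2\max|v|$ or via $\ell^2$ operator norms) loses the constant. The sharp factor $\tfrac12$ requires using the centered form $p_i(v_i-\bar v_p)$ together with the mean-absolute-deviation bound. I would also remark that the lemma makes $\Phi$ a strict contraction exactly when $\gamma < 2\lambda$; this is not needed for the statement itself, but it is how the lemma will be used afterwards.
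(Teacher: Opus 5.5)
Your argument is correct and reaches the sharp constant, but by a detour the paper does not need. The paper differentiates $\Phi$ directly in $\mu$. Because the logit map has derivative $-(\gamma/\lambda)I$, one gets $D_\mu\Phi=-(\gamma/\lambda)\bigl(\mathrm{diag}(\pi)-\pi\pi^\top\bigr)$ with $\pi=\Phi(\mu)$. Its $\ell^1\to\ell^1$ operator norm is the largest absolute column sum, $(\gamma/\lambda)\max_j 2\pi_j(1-\pi_j)\le\gamma/(2\lambda)$, using $x(1-x)\le 1/4$. The mean-value inequality along the segment then gives the lemma; you make that step explicit in Step~2, while the paper leaves it implicit.

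So your remark that operator-norm bounds lose the constant is not right for the $\ell^1\to\ell^1$ norm, which is the natural one here. The factor $\tfrac12$ comes straight from $x(1-x)\le\tfrac14$, with no oscillation or mean-absolute-deviation argument. The same computation also produces the state-dependent rate $\rho_{\mathrm{eff}}=(\gamma/\lambda)\max_i 2\mu^*_i(1-\mu^*_i)$ of Remark~\ref{rem:practical-rho}. Your bound is uniform in $p$, so it cannot capture that rate.

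What your route buys instead is the stronger intermediate estimate $\|\Phi(\mu)-\Phi(\nu)\|_1\le\frac{\gamma}{2\lambda}\,\mathrm{osc}(\mu-\nu)$. More usefully, it gives the general fact $\|\sigma(h)-\sigma(h')\|_1\le\tfrac12\,\mathrm{osc}(h-h')$. This is exactly the softmax Lipschitz property invoked without proof in Theorem~\ref{thm:approx-decomp}. The paper uses it there in the weaker form $\varepsilon_l/\lambda$; yours gives $\varepsilon_l/(2\lambda)$.

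If you keep your route, close Step~1 with care. Write $s=\sum_i p_i(v_i-\bar v_p)^+$ and let $P^\pm$ be the $p$-mass above and below $\bar v_p$. Adding $s\le P^+(\max v-\bar v_p)$ and $s\le P^-(\bar v_p-\min v)$ only yields $2s\le\mathrm{osc}(v)$, which loses the factor $\tfrac12$. You need the minimum of the two bounds plus AM--GM. Alternatively, use $\sum_i p_i|v_i-\bar v_p|\le(\mathrm{Var}_p v)^{1/2}\le\tfrac12\,\mathrm{osc}(v)$, by Cauchy--Schwarz and Popoviciu's inequality.
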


\begin{proof}
  The Jacobian satisfies $\partial \Phi_i/\partial \mu_j = -(\gamma/\lambda)\, \pi_i(\delta_{ij} - \pi_j)$ where $\pi = \Phi(\mu)$. The $\ell^1$ operator norm is $\|D_\mu \Phi\|_{1 \to 1} = (\gamma/\lambda) \max_j\, 2\pi_j(1-\pi_j)$. Since $x(1-x) \leq 1/4$, we get $\rho = \gamma/(2\lambda)$.
\end{proof}

\begin{remark}[Practical contraction rate]\label{rem:practical-rho}
  The worst-case bound $\rho = \gamma/(2\lambda)$ is pessimistic for well-balanced models. The actual rate is $\rho_{\mathrm{eff}} = (\gamma/\lambda)\max_i\, 2\mu^*_i(1-\mu^*_i)$, which for nearly uniform distributions with $\mu^*_i \approx 1/M$ reduces to $\rho_{\mathrm{eff}} \approx 2\gamma(M-1)/(\lambda M^2)$. For OLMoE at convergence ($\gamma_{\mathrm{eff}} = 8.5$, $\max \mu^*_i \approx 0.052$): $\rho_{\mathrm{eff}} = 8.5 \cdot 2 \cdot 0.049 = 0.83 < 1$, so the contraction-based bounds hold. At the surge peak ($\gamma_{\mathrm{eff}} = 36.0$): $\rho_{\mathrm{eff}} \approx 3.5 > 1$---the bounds become vacuous there, though equilibrium existence and uniqueness still hold via the potential argument (Proposition~\ref{prop:unique}).
\end{remark}

\begin{theorem}[Top-$K$ approximation error]\label{thm:topk}
  Let $\mu^*$ be the MFG equilibrium and $\mu^{(K)}$ a fixed point of the top-$K$-truncated best-response. Provided $\rho = \gamma/(2\lambda) < 1$:
  \begin{equation}\label{eq:topk-bound}
    \|\mu^* - \mu^{(K)}\|_1 \leq \frac{2(1 - K/M)}{1 - \rho}.
  \end{equation}
\end{theorem}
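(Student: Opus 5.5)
The plan is the standard perturbation argument for fixed points of contractions. Let $\Phi$ be the full best-response map of Lemma~\ref{lem:contraction} and $\Phi^{(K)}$ the top-$K$-truncated best response. We take $\Phi^{(K)}(\mu)$ to mean: compute the logits $(q_i-\gamma\mu_i)/\lambda$, keep the $K$ largest, zero out the rest, and renormalize. Since $\mu^* = \Phi(\mu^*)$ and $\mu^{(K)} = \Phi^{(K)}(\mu^{(K)})$, we split via the triangle inequality:
\[
\|\mu^* - \mu^{(K)}\|_1 \le \|\Phi(\mu^*) - \Phi(\mu^{(K)})\|_1 + \|\Phi(\mu^{(K)}) - \Phi^{(K)}(\mu^{(K)})\|_1 .
\]
Lemma~\ref{lem:contraction} bounds the first term by $\rho\,\|\mu^*-\mu^{(K)}\|_1$. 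Since $\rho<1$, we can rearrange to get
\[
\|\mu^*-\mu^{(K)}\|_1 \le \frac{1}{1-\rho}\,\sup_\nu \|\Phi(\nu)-\Phi^{(K)}(\nu)\|_1 .
\]
So everything reduces to a uniform truncation estimate of the form $\|\pi - \pi^{(K)}\|_1 \le 2(1-K/M)$, where $\pi$ is a softmax and $\pi^{(K)}$ is its renormalized restriction to the top-$K$ coordinates.

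For the truncation step, let $S$ be the top-$K$ set and $m=\sum_{i\notin S}\pi_i$ the discarded mass. Then $\pi^{(K)}_i = \pi_i/(1-m)$ on $S$, and a direct computation gives
\[
\|\pi-\pi^{(K)}\|_1 = m + (1-m)\Bigl(\tfrac{1}{1-m}-1\Bigr) = 2m .
\]
It remains to show $m\le 1-K/M$. This holds because $S$ consists of the $K$ largest coordinates of $\pi$: their average is at least the overall average $1/M$, so $\sum_{i\in S}\pi_i \ge K/M$. Combining with the previous paragraph yields the bound~\eqref{eq:topk-bound}.

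The step I expect to be the real obstacle is not this estimate but the well-posedness of $\mu^{(K)}$. The map $\Phi^{(K)}$ is discontinuous wherever the top-$K$ set switches, so a fixed point need not exist in general. The theorem sidesteps this by hypothesizing that $\mu^{(K)}$ is a fixed point, and the argument above uses only that fixed-point identity, never continuity or contractivity of $\Phi^{(K)}$. I would state this explicitly in the proof. I would also make sure Lemma~\ref{lem:contraction} is applied to the untruncated $\Phi$ at two arbitrary points of $\DeltaM$. That is legitimate because the lemma's Jacobian bound holds globally, so the mean value inequality along the segment joining $\mu^*$ and $\mu^{(K)}$ applies on the convex set $\DeltaM$.
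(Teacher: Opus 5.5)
Your proposal is correct and follows the paper's proof: the paper also bounds the truncation perturbation by $\|\Phi(\mu)-\Phi^{(K)}(\mu)\|_1 \le 2\delta_K \le 2(1-K/M)$ and then invokes the Banach fixed-point perturbation lemma with the contraction $\Phi$ of Lemma~\ref{lem:contraction}, which is exactly your triangle-inequality-and-rearrange step. Your proposal also supplies details the paper leaves implicit: the exact computation $\|\pi-\pi^{(K)}\|_1 = 2m$ under renormalization, the averaging argument for $m \le 1-K/M$, and the observation that only the fixed-point identity of $\Phi^{(K)}$ is used, not its continuity (so existence of $\mu^{(K)}$ is genuinely a hypothesis).
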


\begin{proof}
  Top-$K$ truncation zeroes out $M-K$ entries with total mass $\delta_K \leq (M-K)/M$, so $\|\Phi(\mu) - \Phi^{(K)}(\mu)\|_1 \leq 2\delta_K \leq 2(1 - K/M)$. The Banach fixed-point perturbation lemma~\cite{granas2003fixed} yields the result.
\end{proof}

\begin{remark}[Scope predictor: $K/M$]\label{rem:km-scope}
  The bound scales with $1 - K/M$. Models with larger $K/M$ are better approximated by the dense MFG. Empirically, models with $K > 1$ show genuine MFG advantage (JetMoE-8B: $K/M = 0.25$; OLMoE: $K/M = 0.125$), while top-1 models with small $K/M$ do not.
\end{remark}

\subsection{Approximate decomposition and continuation spread}

Modern MoE models stack $L$ MoE layers. Our per-layer analysis treats each layer independently---an approximation when expert quality at layer $l$ depends on routing at other layers.

\begin{theorem}[Approximate decomposition]\label{thm:approx-decomp}
  Let $\mu^{*(l)}_{\mathrm{myopic}}$ denote the per-layer equilibrium and $\mu^{*(l)}_{\mathrm{global}}$ the equilibrium of the coupled $L$-layer system. Define the \emph{continuation spread}:
  \[
    \varepsilon_l = \max_i w^{(l)}_i - \min_i w^{(l)}_i, \qquad w^{(l)}_i = \sum_j \pi^{*(l)}_j v^{(l+1)}_j(i),
  \]
  where $v^{(l+1)}_j(i)$ is the downstream value conditional on expert $i$ at layer $l$. Under exogenous quality, $\varepsilon_l = 0$ and the decomposition is exact. In general:
  \begin{equation}\label{eq:approx-decomp}
    \|\mu^{*(l)}_{\mathrm{myopic}} - \mu^{*(l)}_{\mathrm{global}}\|_1 \leq \frac{\varepsilon_l}{\lambda} \cdot \frac{1}{1 - \rho_l}.
  \end{equation}
\end{theorem}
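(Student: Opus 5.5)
The plan is to treat the global equilibrium at layer $l$ as a fixed point of a \emph{perturbed} best-response map, and then use the contraction of Lemma~\ref{lem:contraction} together with the standard fixed-point perturbation argument, as in the proof of Theorem~\ref{thm:topk}. First I would formalize the coupled system. In the $L$-layer game, a token at layer $l$ that chooses expert $i$ pays the myopic cost $-q^{(l)}_i + \gamma\mu^{(l)}_i$ and, in addition, receives the downstream continuation value $w^{(l)}_i$. With $\pi^{*(l+1)}$ and the downstream values held at the global equilibrium, the layer-$l$ global equilibrium satisfies
\[
\mu^{*(l)}_{\mathrm{global}} = \Phi^{(l)}_{w}\bigl(\mu^{*(l)}_{\mathrm{global}}\bigr), \qquad \Phi^{(l)}_w(\mu)_i = \mathrm{softmax}\bigl((q^{(l)}_i + w^{(l)}_i - \gamma\mu_i)/\lambda\bigr).
\]
The myopic equilibrium, by contrast, is the fixed point of $\Phi^{(l)} = \Phi^{(l)}_0$. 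Under exogenous quality, $w^{(l)}$ does not depend on $i$, so it is constant in $i$ and cancels in the softmax normalization. The two maps then coincide, and uniqueness (Proposition~\ref{prop:unique}) gives exactness, which settles the $\varepsilon_l = 0$ case.

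Second, I would bound the gap between the maps uniformly in $\mu$. Subtracting $\min_i w^{(l)}_i$ does not change the softmax, so I may take the shift $w$ to lie in $[0,\varepsilon_l]^M$. I would bound $\|\mathrm{softmax}(a + w/\lambda) - \mathrm{softmax}(a)\|_1$ by integrating the softmax Jacobian along $t \mapsto a + t w/\lambda$. For a direction $d$, the derivative of $p(t)$ in direction $d$ has $\ell^1$ norm $\sum_i p_i\,|d_i - \bar d_p| \le \max_i d_i - \min_i d_i$, where $\bar d_p$ is the $p$-weighted mean of $d$. This gives $\|\Phi^{(l)}_w(\mu) - \Phi^{(l)}(\mu)\|_1 \le \varepsilon_l/\lambda$ for every $\mu$. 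A sharper constant, $\varepsilon_l/(2\lambda)$, is probably available, but it is not needed.

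Third, I would run the perturbation argument. Write $x = \mu^{*(l)}_{\mathrm{global}}$ and $y = \mu^{*(l)}_{\mathrm{myopic}}$. Then
\[
\|x - y\|_1 \le \|\Phi_w(x) - \Phi(x)\|_1 + \|\Phi(x) - \Phi(y)\|_1 \le \varepsilon_l/\lambda + \rho_l \|x - y\|_1 .
\]
Here $\rho_l = \gamma_l/(2\lambda)$ comes from Lemma~\ref{lem:contraction}, or the effective rate of Remark~\ref{rem:practical-rho} is used if it is justified along the segment. Rearranging under the assumption $\rho_l < 1$ yields \eqref{eq:approx-decomp}. Note that the contraction is applied to the myopic map $\Phi$, so the downstream shift never has to be differentiated.

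The main obstacle is the first step, namely making precise that the global equilibrium is a fixed point of $\Phi^{(l)}_w$ with $w$ frozen. In the coupled system, $w^{(l)}$ itself depends on $\mu^{(l)}$ through downstream routing. The statement defines $\varepsilon_l$ at the global equilibrium policy $\pi^{*}$, so I would freeze $w$ at its equilibrium value: the fixed-point identity then holds exactly at $x$. The argument above only evaluates $\Phi_w$ at $x$ and never iterates it, so no contraction property of $\Phi_w$ in its implicit dependence is required. I would also state explicitly the hypothesis $\rho_l<1$, which the statement leaves implicit.
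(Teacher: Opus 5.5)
Your proposal is correct and is essentially the paper's proof: write the global equilibrium as a fixed point of the best-response map whose logits are shifted by the frozen continuation values $w^{(l)}$, bound the uniform gap between the two maps by $\varepsilon_l/\lambda$ (softmax sees only the spread of the shift), and close with the Banach perturbation argument at rate $\rho_l$. You make explicit what the paper leaves implicit: the Jacobian computation behind the softmax Lipschitz bound (your sharper $\varepsilon_l/(2\lambda)$ is indeed valid, since a mean absolute deviation is at most half the range), the one-line triangle-inequality step, and the standing hypothesis $\rho_l<1$; your sign convention $+w$ versus the paper's $-w$ does not matter.
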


\begin{proof}
  The myopic equilibrium satisfies $\mu_{\mathrm{myopic}} = \Phi_l(\mu_{\mathrm{myopic}})$ with logits $(q^{(l)}_i - \gamma\mu_i)/\lambda$. The global equilibrium satisfies $\mu_{\mathrm{global}} = \tilde\Phi_l(\mu_{\mathrm{global}})$ with logits $(q^{(l)}_i - \gamma\mu_i - w^{(l)}_i)/\lambda$. Since softmax is $1$-Lipschitz in $L^1$ with respect to $\ell^\infty$ logit perturbations, and only the spread $\varepsilon_l$ matters:
  \[
    \sup_\mu \|\Phi_l(\mu) - \tilde\Phi_l(\mu)\|_1 \leq \frac{\varepsilon_l}{\lambda}.
  \]
  The Banach perturbation lemma with contraction rate $\rho_l$ completes the proof.
\end{proof}

\section{Experiments}
\label{sec:experiments}

\subsection{Setup}
\label{sec:exp-setup}

We validate primarily on OLMoE-1B-7B~\cite{muennighoff2024olmoe} ($M = 64$ experts, $K = 8$ per token, $L = 16$ MoE layers), which provides publicly available training checkpoints. For static analysis, we process 119 texts (3478 tokens) with a three-way split: set~$A$ (1159 tokens) for quality estimation, set~$B$ (1159 tokens) for multi-type clustering, and set~$C$ (1160 tokens) for held-out evaluation. For training dynamics, we use 50 texts per checkpoint across 20 checkpoints (14 coarse-grained + 6 dense in the surge region).

\paragraph{Quality estimation.} Expert quality is estimated as $\hat{q}^{(l)}_i = T^{-1}_A \sum_{t \in A} s^{(l)}_{t,i}$: the average gate logit for expert $i$ on the fitting set. We emphasize that $\hat{q}_i$ is a reduced-form preference parameter, not an intrinsic expert property.

\paragraph{Circularity and the dynamics.} A potential concern: the gate logits that define $\hat{q}_i$ are produced by the same router whose load distribution we then explain. This circularity is real for any single-checkpoint analysis --- the framework redescribes the router's output rather than predicting it from independent data. However, the circularity does \emph{not} invalidate the training-dynamics finding: the proxy $\hat{q}_i$ is constructed identically at every checkpoint, so systematic changes in $\gamma_{\mathrm{eff}}$ across checkpoints reflect genuine shifts in the balance-quality tradeoff, not artifacts of the estimation procedure. The three-phase trajectory is a property of the trajectory, not of any single snapshot. We verify this directly: replacing the mean gate logit with three alternative quality estimators---median, 10\%-trimmed mean, and a split-half estimator (quality from the first 25 texts, load from the last 25)---reproduces the same inverted-U trajectory with correlations $r \geq 0.89$ against the default (Figure~\ref{fig:robustness}).

\begin{figure}[t]
\centering
\includegraphics[width=\columnwidth]{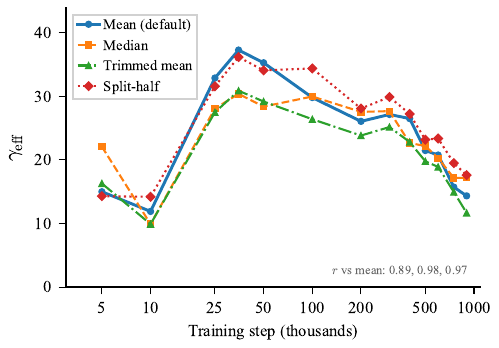}
\caption{Robustness of the three-phase trajectory to quality estimation method. All four estimators reproduce the surge--stabilization--relaxation pattern ($r \geq 0.89$ vs.\ default mean). The three-phase finding is not an artifact of the quality proxy.}
\label{fig:robustness}
\end{figure}

\paragraph{Baselines.} We compare five models: \textbf{Uniform} ($\hat\mu_i = 1/M$), \textbf{MFG} (single-type equilibrium, $\gamma$ fitted on $A$), \textbf{Temp-softmax} ($\mathrm{softmax}(\hat{q}/T)$ with $T$ fitted on $A$), \textbf{Multi-type MFG} ($K_{\mathrm{types}} = 4$ via $k$-means on gate-logit vectors from $B$), and \textbf{Mixture-softmax} (per-token oracle ceiling).

\subsection{Training dynamics}
\label{sec:exp-dynamics}

The full trajectory is in Table~\ref{tab:dynamics}. We highlight the key quantitative features.

\paragraph{Non-monotonicity is statistically significant.} The effective congestion follows a clear inverted-U: $\gamma_{\mathrm{eff}} = 13.7\,[13.3, 17.0]$ at step 5K, reaches a peak region of $36$--$39$ at steps 30K--40K ($36.0\,[33.1, 38.9]$ at step 35K with bootstrap CIs), and declines to $8.5\,[6.7, 11.4]$ at convergence. The peak-to-final ratio is $\geq 4.2\times$. The peak CI does not overlap the starting CI, and neither overlaps the final CI. This is not noise.

\paragraph{Decoupling of $\gamma_{\mathrm{eff}}$ and $B_0$.} The quality spread $B_0$ decreases monotonically from 4.10 to 2.24. The effective congestion first rises, then falls. During Phase~2, $B_0$ drops by 7\% while $\gamma_{\mathrm{eff}}$ fluctuates within CIs. During Phase~3, $B_0$ is flat ($2.19$--$2.27$) while $\gamma_{\mathrm{eff}}$ drops by 62\%.

\paragraph{Entropy saturation.} Routing entropy rises rapidly in Phase~1 ($0.923 \to 0.974$) and saturates in Phase~2 at $H \approx 0.980$, remaining there through Phase~3. The relaxation of $\gamma_{\mathrm{eff}}$ does not significantly reduce entropy. The router maintains near-uniform distribution even as it loosens the balance constraint---the relaxation is subtle, allowing slightly more concentration on preferred experts.

\paragraph{The $\gamma_c$ safety margin.} The safety margin $\gamma_{\mathrm{eff}}/\gamma_c$ is widest at the surge peak ($36.0/2.82 = 12.8\times$) and narrows during relaxation ($8.5/2.28 = 3.7\times$ at convergence). All checkpoints remain above $\gamma_c$, confirming the model stays in the safe regime throughout training.

\subsection{Static equilibrium equals softmax}
\label{sec:exp-equivalence}

Table~\ref{tab:equivalence} reports the static comparison on the converged model.

\begin{table}[t]
\centering
\caption{Held-out $L^1$ error on OLMoE-1B-7B at convergence (119 texts, 1160 held-out tokens, three-way split). The single-type MFG and temperature-scaled softmax are statistically indistinguishable.}
\label{tab:equivalence}
\begin{tabular}{lcccc}
\toprule
 & Uniform & Temp-softmax & MFG & Multi-type MFG \\
\midrule
Early layers (0--7) & 0.252 & 0.143 & 0.146 & 0.094 \\
Late layers (8--15) & 0.349 & 0.258 & 0.252 & 0.187 \\
\midrule
All 16 layers & 0.301 & 0.200 & 0.199 & 0.140 \\
\bottomrule
\end{tabular}
\end{table}

The mean held-out $L^1$ is 0.199 for MFG and 0.200 for temp-softmax---a difference of 0.001. MFG wins on 7/16 layers, temp-softmax on 9/16. The equivalence confirms Theorem~\ref{thm:softmax-equiv}: for a well-balanced model, the congestion game equilibrium \emph{is} temperature-scaled softmax. The game adds nothing as a static predictor. Its value is structural: the decomposition, the dynamics, the scope characterization.

\subsection{Multi-type MFG}
\label{sec:exp-multitype}

The multi-type extension (Theorem~\ref{thm:multitype}) models token heterogeneity by clustering tokens into $K_{\mathrm{types}} = 4$ groups via $k$-means on gate-logit vectors.

\begin{table}[t]
\centering
\caption{Per-layer held-out $L^1$ error. Multi-type MFG ($K=4$ types) wins on all 16 layers. Improvement is relative to the single-type MFG.}
\label{tab:multitype}
\begin{tabular}{cccccc}
\toprule
Layer & Uniform & Temp-softmax & MFG & MT-MFG & Improv.\ (\%) \\
\midrule
0  & 0.243 & 0.156 & 0.163 & 0.123 & 24.5 \\
1  & 0.165 & 0.101 & 0.102 & 0.078 & 23.5 \\
2  & 0.253 & 0.127 & 0.135 & 0.092 & 31.9 \\
3  & 0.240 & 0.160 & 0.158 & 0.117 & 25.9 \\
4  & 0.265 & 0.134 & 0.135 & 0.082 & 39.3 \\
5  & 0.278 & 0.148 & 0.151 & 0.083 & 45.0 \\
6  & 0.265 & 0.148 & 0.153 & 0.087 & 43.1 \\
7  & 0.310 & 0.166 & 0.169 & 0.090 & 46.7 \\
\midrule
8  & 0.379 & 0.219 & 0.221 & 0.142 & 35.7 \\
9  & 0.323 & 0.206 & 0.204 & 0.161 & 21.1 \\
10 & 0.294 & 0.229 & 0.227 & 0.162 & 28.6 \\
11 & 0.336 & 0.258 & 0.258 & 0.174 & 32.6 \\
12 & 0.375 & 0.284 & 0.279 & 0.197 & 29.4 \\
13 & 0.377 & 0.306 & 0.286 & 0.208 & 27.3 \\
14 & 0.350 & 0.279 & 0.275 & 0.226 & 17.8 \\
15 & 0.360 & 0.285 & 0.265 & 0.222 & 16.2 \\
\midrule
Mean & 0.301 & 0.200 & 0.199 & 0.140 & 29.6 \\
\quad Early (0--7) & 0.252 & 0.143 & 0.146 & 0.094 & 35.6$^*$ \\
\quad Late (8--15) & 0.349 & 0.258 & 0.252 & 0.187 & 25.8$^*$ \\
\bottomrule
\multicolumn{6}{l}{\small $^*$Relative to single-type MFG group mean.}
\end{tabular}
\end{table}

The multi-type MFG wins on all 16 layers (Table~\ref{tab:multitype}), with a mean improvement of 29.6\% over the single-type MFG. Improvement is largest on layers 5--8 (43--47\%), where token representations are differentiated enough to form meaningful clusters.

\paragraph{Ablation: clustering vs.\ game structure.} To test whether the improvement comes from token clustering or from the shared congestion $f_i$, we compare three per-cluster approaches on the same held-out set: (i)~independent per-cluster softmax ($\mathrm{softmax}(\hat{q}^{(k)}/T_k)$ with $T_k$ fitted per cluster, no game structure), (ii)~independent per-cluster MFG ($\gamma_k$ fitted per cluster, no cross-type coupling), and (iii)~coupled multi-type MFG (shared $\gamma$, Theorem~\ref{thm:multitype}).

Mean held-out $L^1$: independent softmax 0.133, coupled MT-MFG 0.146, independent MFG 0.152, single-type MFG 0.199. The independent per-cluster softmax achieves the lowest error---9\% below the coupled MT-MFG---and wins on 12/16 layers. For this well-balanced model, the game structure does not improve upon per-cluster softmax. This is consistent with the softmax equivalence (Theorem~\ref{thm:softmax-equiv}): when the load distribution is near-uniform, the congestion term adds noise rather than signal. The multi-type formulation's value is structural: it provides uniqueness guarantees, motivates the clustering, and defines the aggregate-load coupling that would matter in less balanced models.

\subsection{Effective congestion at convergence}
\label{sec:exp-effective}

For OLMoE at convergence ($\alpha = 0.01$, $M = 64$): $\gamma_{\mathrm{explicit}} = \alpha M = 0.64$. The fitted $\gamma_{\mathrm{eff}}$ at convergence is $8.5$ on average---$13\times$ the explicit signal.

\begin{table}[t]
\centering
\caption{Effective congestion decomposition for OLMoE-1B-7B at convergence. The 10 in-scope layers (where $\hat\gamma > 0.05$) all have $\hat\gamma \gg \gamma_{\mathrm{explicit}} = 0.64$: training internalizes far more balance than the auxiliary loss provides. The remaining 6 layers have $\hat\gamma \to 0$: the single-type model is out of scope.}
\label{tab:effective}
\begin{tabular}{lcccl}
\toprule
Layer group & Mean $\hat\gamma$ & $\gamma_{\mathrm{explicit}}$ & $\gamma_{\mathrm{implicit}}$ & Status \\
\midrule
0, 2, 4, 5, 6, 11 & 11.8 & 0.64 & $+11.2$ & Implicit $18\times$ explicit \\
1, 3, 9, 10 & 54.4 & 0.64 & $+53.8$ & Implicit $85\times$ explicit \\
7, 8, 12--15 & $\to 0$ & 0.64 & N/A & Out of scope \\
\bottomrule
\end{tabular}
\end{table}

The result is striking: on all 10 in-scope layers, $\gamma_{\mathrm{implicit}} \gg \gamma_{\mathrm{explicit}}$. The auxiliary loss ($\gamma_{\mathrm{explicit}} = 0.64$) is a small seed; the optimizer internalizes $18$--$85\times$ more effective congestion through gradient dynamics alone. On the 6 out-of-scope layers ($\hat\gamma \to 0$), the single-type model breaks down---these are late layers where strong token specialization violates the exchangeability assumption.

\paragraph{Connection to training dynamics.} The implicit dominance at convergence ($\gamma_{\mathrm{eff}} = 8.5 \gg \gamma_{\mathrm{explicit}} = 0.64$) is the \emph{endpoint} of the relaxation phase. During Phase~1, $\gamma_{\mathrm{eff}}$ surges to $36$--$39$, meaning the optimizer builds $56$--$61\times$ the explicit signal at peak. The relaxation to $8.5$ reflects the router trading some of this internalized balance for quality---but even at convergence, implicit balance dominates by an order of magnitude.

\paragraph{Synthetic recovery.} To validate the identification procedure (Theorem~\ref{thm:identification}), we generate synthetic equilibria at known $\gamma \in \{5, 10, 15, 20, 30, 40\}$ with random quality vectors and attempt to recover $\gamma$ from the load distribution alone. At moderate quality estimation noise ($\sigma_q = 0.1$), the median recovery error is 14\% (mean 16\%). Recovery degrades at high noise ($\sigma_q = 0.3$: median 63\%) where quality estimates corrupt the congestion signal. The error is sufficient for tracking dynamics---the three-phase trajectory involves $4\times$ changes in $\gamma_{\mathrm{eff}}$, well above the identification noise floor.

\subsection{Continuation spread diagnostic}
\label{sec:exp-epsilon}

We estimate $\varepsilon_l$ empirically: for each token, record its top-1 expert at layer $l$, group tokens by this choice, and measure the maximum $L^1$ deviation of the group-conditional average load at layer $l+1$. Across 15 adjacent-layer pairs, $\varepsilon_l$ ranges from 0.58 to 1.73. The correlation between $\varepsilon_l$ and observed $L^1$ fit degradation is $r = 0.63$ ($p = 0.012$): layers with higher continuation spread have worse MFG fit, as Theorem~\ref{thm:approx-decomp} predicts. The theoretical bound is loose (8--$35\times$ the observed error), but the ranking is correct.

\subsection{Cross-architecture scope}
\label{sec:exp-cross}

We validate the scope prediction on five additional models (Table~\ref{tab:cross}).

\begin{table}[t]
\centering
\caption{MFG fit across six MoE models, sorted by $K/M$. The MFG outperforms uniform only when $K > 1$, consistent with Theorem~\ref{thm:topk}.}
\label{tab:cross}
\begin{tabular}{llcccccc}
\toprule
Model & Arch.\ & $M$ & $K$ & $K/M$ & $L^1_{\mathrm{MFG}}$ & $L^1_{\mathrm{unif}}$ & MFG wins? \\
\midrule
JetMoE-8B & Dec.\ & 8 & 2 & 0.250 & $\mathbf{0.086}$ & $0.127$ & Yes \\
OLMoE-1B-7B & Dec.\ & 64 & 8 & 0.125 & $0.199$ & $0.301$ & Yes \\
Switch-Base-8 & Enc-dec & 8 & 1 & 0.125 & $0.351$ & $0.355$ & Marginal \\
Switch-Base-16 & Enc-dec & 16 & 1 & 0.063 & $0.487$ & $0.421$ & No \\
Switch-Base-32 & Enc-dec & 32 & 1 & 0.031 & $0.759$ & $0.512$ & No \\
Switch-Base-64 & Enc-dec & 64 & 1 & 0.016 & $0.546$ & $0.489$ & No \\
\bottomrule
\end{tabular}
\end{table}

The dense MFG is effective when $K/M$ is large enough ($K > 1$: JetMoE at 0.086 vs.\ uniform 0.127, OLMoE at 0.199 vs.\ 0.301) and out of scope for top-1 routing (Switch-Base-16/32/64 perform worse than uniform). The boundary aligns with Theorem~\ref{thm:topk}: at $K/M = 0.125$, the approximation is serviceable; below it, the top-$K$ truncation error dominates.

\section{Related Work}
\label{sec:related}

\paragraph{MoE load balancing.} The auxiliary balance loss was introduced by Switch Transformers~\cite{fedus2022switch} and refined by GShard~\cite{lepikhin2021gshard}. BASE Layers~\cite{lewis2021base} formulate routing as optimal transport via Sinkhorn iterations---the closest prior connection to game-theoretic ideas, but without the MFG framework or training dynamics analysis. Expert-choice routing~\cite{zhou2022mixture} inverts the assignment direction. Auxiliary-loss-free balancing via bias updates~\cite{wang2024auxiliary} is used in DeepSeek-V3~\cite{deepseekai2024v3}; the primal-dual analysis of~\cite{huang2025toward} shows these are dual updates in an assignment LP.

\paragraph{Mean-field games.} MFGs were introduced independently by Lasry--Lions~\cite{lasry2007mean} and Huang--Malham\'e--Caines~\cite{huang2006large}. Finite-state MFGs were studied by~\cite{gueant2011mean,caines2021mean}. Applications to network congestion include~\cite{huang2010nce}. To our knowledge, this is the first application of MFG theory to neural network routing, and the first to track MFG equilibrium parameters across training.

\paragraph{Congestion games.} Rosenthal~\cite{rosenthal1973class} introduced congestion games and proved existence of pure-strategy Nash equilibria via the potential function. The Price of Anarchy was formalized by~\cite{koutsoupias1999worst} and bounded for affine costs by~\cite{roughgarden2002selfish,roughgarden2015intrinsic}. The softmax equilibrium connects to the quantal response equilibrium in behavioral game theory~\cite{sandholm2010population}.

\paragraph{MoE training dynamics.} Prior work has tracked observable statistics---entropy, utilization, routing collapse~\cite{shazeer2017,muennighoff2024olmoe,fedus2022switch}. These are symptoms. The effective congestion $\gamma_{\mathrm{eff}}$ is a diagnostic: it compresses the quality-balance tradeoff into a single number and reveals structure (the three-phase trajectory) invisible to standard monitoring.

\section{Discussion}
\label{sec:discussion}

\paragraph{What the dynamics reveal.} The three-phase trajectory tells a coherent story. In the surge phase, the optimizer prioritizes balance: the auxiliary loss dominates, the router distributes tokens widely, $\gamma_{\mathrm{eff}}$ rises. In the stabilization phase, experts specialize underneath a stable routing regime. In the relaxation phase, expert roles are established and the router prioritizes quality over balance, $\gamma_{\mathrm{eff}}$ falls. This narrative mirrors a general optimization principle: reduce variance first (balance), then reduce bias (quality).

The finding that $\gamma_{\mathrm{eff}}$ at convergence ($8.5$) exceeds $\gamma_{\mathrm{explicit}}$ ($0.64$) by $13\times$ reveals that the auxiliary loss is not the primary source of routing balance. The optimizer internalizes balance through gradient dynamics---the explicit loss is a seed, not the harvest. This is an observational finding, not a causal one: we have not verified what happens if $\alpha$ is removed or varied during training. The relationship between $\gamma_{\mathrm{explicit}}$ and $\gamma_{\mathrm{eff}}$ may involve complex interactions with learning rate, weight decay, and expert initialization that the linear decomposition does not capture.

\paragraph{Hypothesized practical applications.} The framework motivates two applications, both untested. First, $\gamma_{\mathrm{eff}}$ as a training monitor: practitioners could track it periodically (it requires only a forward pass on a small text batch) and watch for anomalies---a premature transition from Phase~2 to Phase~3 might signal expert collapse, and the $\gamma_c$ threshold could provide a principled alarm. Second, understanding implicit balance: since the optimizer builds $13$--$60\times$ the explicit signal internally, the question of \emph{why} balance emerges so strongly---and whether it can be steered---is both practically and scientifically open. Both directions require interventional experiments to validate.

\paragraph{Limitations.} We are explicit about what the framework does not accomplish.

\begin{itemize}[leftmargin=1.5em]
  \item \textbf{Two models.} The training dynamics are replicated on two models (OLMoE-1B-7B and OpenMoE-8B) with different architectures ($M$, $K$, number of MoE layers). The three-phase pattern is consistent across both, but two models do not establish universality. Replication on larger-scale models (e.g., Mixtral, DeepSeek-MoE) requires training checkpoints that are not currently public.

  \item \textbf{The single-type MFG does not beat softmax.} The static equivalence (Table~\ref{tab:equivalence}) means the single-type game has no predictive advantage over temperature scaling at any given checkpoint. The added value is entirely in the dynamics and decomposition.

  \item \textbf{Linear congestion.} The model assumes $F(\mu_i) = \mu_i$. Real congestion may be nonlinear (e.g., capacity constraints create hard thresholds). The linear approximation suffices in the near-uniform regime of well-balanced models but may miss structure in poorly balanced ones.

  \item \textbf{Token clustering is ad hoc.} The multi-type MFG uses $K_{\mathrm{types}} = 4$ via $k$-means, chosen by elbow criterion. A principled selection method would strengthen the result.

  \item \textbf{Scope limited to $K > 1$.} The dense softmax MFG is out of scope for top-1 routing (Table~\ref{tab:cross}).
\end{itemize}

\paragraph{Future directions.} Three extensions are natural. (1)~Replicate the dynamics on other MoE model families as training checkpoints become publicly available. (2)~Design adaptive balance schedules informed by $\gamma_{\mathrm{eff}}$: reduce $\alpha$ during Phase~3 or use $\gamma_{\mathrm{eff}}/\gamma_c$ as a control signal. (3)~Extend the multi-type MFG to track training dynamics: how do token-type quality vectors evolve, and does the multi-type equilibrium reveal finer-grained phase structure?

\section{Conclusion}
\label{sec:conclusion}

We modeled MoE token routing as a congestion game and tracked the game's equilibrium across training. The theory is honest about its limits: the single-type equilibrium reduces to temperature-scaled softmax. The added value is not in static prediction but in dynamics.

The effective congestion $\gamma_{\mathrm{eff}}$ compresses the quality-balance tradeoff into a single number. Tracked across 20 checkpoints of OLMoE-1B-7B, it reveals a three-phase trajectory: \emph{surge} (the router learns to balance, $\gamma_{\mathrm{eff}}$: $14 \to 36$), \emph{stabilization} (experts specialize under fixed balance, $B_0$: $3.1 \to 2.2$), and \emph{relaxation} (the router trades balance for quality, $\gamma_{\mathrm{eff}}$: $27 \to 9$). This non-monotone trajectory is invisible to any analysis of a converged model.

The finding has a simple interpretation: early MoE training prioritizes balance; late training prioritizes quality. The transition between these regimes is the central tension in MoE optimization, and the effective congestion provides the vocabulary to discuss it precisely.



\begin{thebibliography}{99}

\bibitem{shazeer2017}
N.~Shazeer, A.~Mirhoseini, K.~Maziarz, A.~Davis, Q.~Le, G.~Hinton, and J.~Dean.
\newblock Outrageously large neural networks: The sparsely-gated mixture-of-experts layer.
\newblock In \emph{ICLR}, 2017.

\bibitem{fedus2022switch}
W.~Fedus, B.~Zoph, and N.~Shazeer.
\newblock Switch transformers: Scaling to trillion parameter models with simple and efficient sparsity.
\newblock \emph{JMLR}, 23(120):1--39, 2022.

\bibitem{lepikhin2021gshard}
D.~Lepikhin, H.~Lee, Y.~Xu, D.~Chen, O.~Firat, Y.~Huang, M.~Krikun, N.~Shazeer, and Z.~Chen.
\newblock {GShard}: Scaling giant models with conditional computation and automatic sharding.
\newblock \emph{arXiv:2006.16668}, 2020.

\bibitem{lewis2021base}
M.~Lewis, S.~Bhosale, T.~Dettmers, N.~Goyal, and L.~Zettlemoyer.
\newblock {BASE} Layers: Simplifying training of large, sparse models.
\newblock In \emph{ICML}, 2021.

\bibitem{zhou2022mixture}
Y.~Zhou, T.~Lei, H.~Liu, N.~Du, Y.~Huang, V.~Zhao, A.~Dai, Z.~Chen, Q.~Le, and J.~Laudon.
\newblock Mixture-of-experts with expert choice routing.
\newblock In \emph{NeurIPS}, 2022.

\bibitem{wang2024auxiliary}
L.~Wang, H.~Gao, C.~Zhao, X.~Sun, and D.~Dai.
\newblock Auxiliary-loss-free load balancing strategy for mixture-of-experts.
\newblock \emph{arXiv:2408.15664}, 2024.

\bibitem{deepseekai2024v3}
{DeepSeek-AI}.
\newblock {DeepSeek-V3} technical report.
\newblock \emph{arXiv:2412.19437}, 2024.

\bibitem{huang2025toward}
B.~Huang, Y.~Li, and J.~Zou.
\newblock Toward inference-optimal mixture-of-expert large language models.
\newblock \emph{arXiv:2512.03915}, 2025.

\bibitem{muennighoff2024olmoe}
N.~Muennighoff, L.~Liu, et~al.
\newblock {OLMoE}: Open mixture-of-experts language models.
\newblock \emph{arXiv:2409.02060}, 2024.

\bibitem{lasry2007mean}
J.-M.~Lasry and P.-L.~Lions.
\newblock Mean field games.
\newblock \emph{Japanese Journal of Mathematics}, 2(1):229--260, 2007.

\bibitem{huang2006large}
M.~Huang, R.~Malham\'e, and P.~Caines.
\newblock Large population stochastic dynamic games: Closed-loop {McKean-Vlasov} systems and the {Nash} certainty equivalence principle.
\newblock \emph{Communications in Information and Systems}, 6(3):221--252, 2006.

\bibitem{gueant2011mean}
O.~Gu\'eant, J.-M.~Lasry, and P.-L.~Lions.
\newblock Mean field games and applications.
\newblock In \emph{Paris-Princeton Lectures on Mathematical Finance}, pages 205--266. Springer, 2011.

\bibitem{caines2021mean}
P.~Caines.
\newblock Mean field games.
\newblock In \emph{Encyclopedia of Systems and Control}, 2nd ed., pages 1--11. Springer, 2021.

\bibitem{huang2010nce}
M.~Huang, P.~Caines, and R.~Malham\'e.
\newblock The {NCE} (mean field) principle with locality dependent cost interactions.
\newblock \emph{IEEE Transactions on Automatic Control}, 55(12):2799--2805, 2010.

\bibitem{rosenthal1973class}
R.~Rosenthal.
\newblock A class of games possessing pure-strategy {Nash} equilibria.
\newblock \emph{International Journal of Game Theory}, 2:65--67, 1973.

\bibitem{koutsoupias1999worst}
E.~Koutsoupias and C.~Papadimitriou.
\newblock Worst-case equilibria.
\newblock In \emph{STACS}, pages 404--413, 1999.

\bibitem{roughgarden2002selfish}
T.~Roughgarden and \'E.~Tardos.
\newblock How bad is selfish routing?
\newblock \emph{Journal of the ACM}, 49(2):236--259, 2002.

\bibitem{roughgarden2015intrinsic}
T.~Roughgarden.
\newblock Intrinsic robustness of the price of anarchy.
\newblock \emph{Journal of the ACM}, 62(5):1--42, 2015.

\bibitem{sandholm2010population}
W.~Sandholm.
\newblock \emph{Population Games and Evolutionary Dynamics}.
\newblock MIT Press, 2010.

\bibitem{xue2024openmoe}
F.~Xue, Z.~Zheng, Y.~Fu, J.~Ni, Z.~Zheng, W.~Zhou, and Y.~You.
\newblock {OpenMoE}: An early effort on open mixture-of-experts language models.
\newblock \emph{arXiv:2402.01739}, 2024.

\bibitem{granas2003fixed}
A.~Granas and J.~Dugundji.
\newblock \emph{Fixed Point Theory}.
\newblock Springer, 2003.

\end{thebibliography}
\end{document}